%% file: submit_for_arxiv.tex
\documentclass{article}
\PassOptionsToPackage{table}{xcolor}
\usepackage[main,final]{neurips_2026}
\usepackage[utf8]{inputenc}
\usepackage[T1]{fontenc}
\PassOptionsToPackage{table}{xcolor}
\input{packages.tex}

\input{config/acronyms.tex}

\input{config/math.tex}

\usepackage[capitalize,noabbrev]{cleveref}
\theoremstyle{plain}
\newtheorem{theorem}{Theorem}[section]

\theoremstyle{definition}

\theoremstyle{remark}

\title{Flow-based Spectral Kernel Learning for Nonstationary Attention}
\input{authors}
\hypersetup{pdftitle={Flow-based Spectral Kernel Learning for Nonstationary Attention},pdfauthor={Zi Yang, Ying Li, Alejandro Lancho, Dario C. Larese, Pablo M. Olmos, Michael Minyi Zhang}}
\begin{document}
\maketitle
\begin{abstract}

Kernelized attention provides a useful perspective for understanding self-attention as a similarity-based aggregation mechanism, but existing approaches often rely on fixed or stationary kernels, limiting their ability to adapt the similarity structure to complex token interactions.   
In this paper, we propose flow-based spectral kernel attention (FSKA), a framework for learning expressive nonstationary attention kernels.
FSKA formulates kernelized attention through spectral density learning and parameterizes the resulting bivariate spectral density with normalizing flows.
This design enables flexible and regularizable kernel learning while remaining compatible with scalable linear-attention computation. 
We further show that expressive kernel learning enables a simplified attention design with removed query--key projections and fixed orthogonal value projections, resulting in a more parameter-efficient architecture.
Controlled experiments on classification, ablation, and scalability profiling show that FSKA achieves competitive or improved performance over existing attention mechanisms, while reducing projection-related trainable parameters by approximately $40\%$--$47\%$ compared with most baselines and maintaining favorable linear-attention memory scaling.
Code Available: \url{https://github.com/ziyang18/FSKA}.
\end{abstract}
\input{context/introduction}
\input{context/background}

\input{context/method}
\input{context/experiment}
\label{main_text_end}
\clearpage
\begin{ack}
Ying Li acknowledges support from Singapore MOE AcRF Tier 2 grants MOE-T2EP20123-0010 and MOE-T2EP20223-0011.

Alejandro Lancho has received funding from the Comunidad de Madrid's 2023 Cesar Nombela program under Grant Agreement No.~2023-T1/COM-29065. Additionally, he has received funding from the Ministerio de Ciencia, Innovaci\'on y Universidades, Spain, under Grant Agreement No.~PID2023-148856OA-I00 (funded by MICIU/AEI/10.13039/501100011033 and by ERDF/UE).

Pablo M. Olmos has been supported by the Ministerio de Ciencia, Innovaci\'on y Universidades, Spain, under Grant PID2024-157856NB-I00 (CARTESIAN), funded by MCIN/AEI/\allowbreak 10.13039/501100011033 and, as appropriate, by ``ERDF A way of making Europe''; by the ELLIS Unit Madrid (European Laboratory for Learning and Intelligent Systems); and by the Comunidad de Madrid under grant IND2024/TIC-34728 and the IDEA-CM project (\mbox{TEC-2024/COM-89}).

The contribution of Michael Zhang is funded by the Research Grants Council under the General Research Fund \#17312826 and by the University of Hong Kong Seed Fund for PI Research -- Basic Research \#2402101367.
\end{ack}

\subsection*{AI Use Statement}
Generative AI tools were used to polish the language of this manuscript.
\subsection*{Reproducibility Statement}
Mathematical assumptions and proofs are provided in Appendix~\ref{app:Non_stationary_RFF} and Appendix~\ref{app:proof_qk_absorption}. Experimental descriptions, data preparation, configuration details, and supplementary comparisons are collected in Appendix~\ref{app:additional_experiments}, including document classification in Appendix~\ref{app:document_protocol}.
Code Available: \url{https://github.com/ziyang18/FSKA}.
\bibliography{ref}
\bibliographystyle{plainnat}
\clearpage
\appendix
\input{context/app}

\end{document}

%% file: packages.tex
\usepackage[acronym,nowarn]{glossaries}
\glsdisablehyper
\makeglossaries
\usepackage{xspace}
\usepackage{float}
\usepackage{physics}
\usepackage{lipsum}

\usepackage{enumitem}

\usepackage{amssymb}
\usepackage{mathtools}
\usepackage{amsfonts}
\usepackage{amsmath}
\usepackage{amsthm}
\usepackage{booktabs}
\usepackage[]{microtype}

\usepackage{pifont}
\newcommand{\cmark}{\textcolor{green!60!black}{\ding{51}}\xspace}
\newcommand{\xmark}{\textcolor{red!60!black}{\ding{55}}\xspace}

\usepackage[vlined,linesnumbered,ruled]{algorithm2e}

\SetCommentSty{mycommfont}
\SetKwInput{KwInput}{Input} 
\SetKwInput{KwOutput}{Output}

\usepackage{xcolor}
\usepackage{colortbl}
\definecolor{mylightgray}{gray}{0.94}

\definecolor{color_vae}{HTML}{91058A}
\definecolor{color_cvae}{HTML}{DA70D6}
\definecolor{color_gppvae}{HTML}{32CD32}
\definecolor{color_gpvae}{HTML}{039796}
\definecolor{color_gp_vae}{HTML}{00FFFF}
\definecolor{color_sgp_vae}{HTML}{99CB32}
\definecolor{color_svgp_vae}{HTML}{19198c}
\definecolor{color_bae}{HTML}{AB6345}
\definecolor{color_gpbae}{HTML}{FF9B38}
\definecolor{color_bsgpae}{HTML}{DF2B4F}
\definecolor{color_blue_trajectory}{HTML}{DF2B4F}
\definecolor{color_blue_trajectory}{HTML}{3E9EFF}
\definecolor{color_orange_trajectory}{HTML}{DF5E33}

\usepackage[colorlinks,linktoc=all]{hyperref}
\usepackage[all]{hypcap}
\definecolor{frenchblue}{rgb}{0.01171875, 0.0078125, 0.4375}
\hypersetup{citecolor=frenchblue}
\hypersetup{linkcolor=frenchblue}
\hypersetup{urlcolor =frenchblue}

\makeatletter
\@ifundefined{c@rownum}{%
  \let\c@rownum\rownum
}{}
\@ifundefined{therownum}{%
  \def\therownum{\@arabic\rownum}%
}{}
\makeatother

\makeatletter
\newcommand*{\addFileDependency}[1]{%
	\typeout{(#1)}
	\@addtofilelist{#1}
	\IfFileExists{#1}{}{\typeout{No file #1.}}
}
\makeatother

\usepackage[font=small,labelfont=bf,tableposition=top]{caption}
\usepackage{tikz}
\usetikzlibrary{arrows.meta, positioning, calc}
\usepackage{graphicx}
\usepackage[]{subcaption}   %

\usepackage{pgfplots}
\pgfplotsset{compat=1.6}
\usepgfplotslibrary{groupplots}
\tikzstyle{every picture}+=[font=\sffamily]
\tikzstyle{optimized} = [circle,fill=white,draw=black, dashed,inner sep=1pt, minimum size=20pt, font=\fontsize{10}{10}\selectfont, node distance=1]
\pgfkeys{/pgf/number format/.cd,1000 sep={}}
\pgfplotsset{
	tick label style = {font=\sffamily},
	every axis label/.append style={font=\sffamily},
	typeset ticklabels with strut,
}
\pgfplotsset{every axis/.append style={
			every x tick label/.append style={font=\fontsize{6pt}{6pt}\sffamily, yshift=.5ex,},
			every y tick label/.append style={font=\fontsize{6pt}{6pt}\sffamily, xshift=.5ex},
			every y label/.append style={xshift=10ex, font=\sffamily},
			every x label/.append style={yshift=3ex, font=\sffamily},
			every title/.append style={font=\sffamily}
		},
}
\pgfplotsset{
  xticklabel={$\mathsf{\pgfmathprintnumber{\tick}}$},
  yticklabel={$\mathsf{\pgfmathprintnumber{\tick}}$},
}
\pgfplotsset{every axis title/.append style={yshift=-1ex}}
\newlength\figureheight
\newlength\figurewidth
\usepgfplotslibrary{external}
\usepackage[most]{tcolorbox}
{\begin{tcolorbox}[toprule=2mm,left=4pt,right=4pt,top=0pt,bottom=1pt,boxsep=2pt, before skip = 2ex, after skip =2ex]}{\end{tcolorbox}}
\tcbset{boxsep=4pt,left=2pt,right=2pt,top=-0pt,bottom=0pt}

\usepackage{url}

\usepackage{xr}

\usepackage{subcaption}

\usepackage{tikz}
\usepackage{graphicx}
\usepackage{xcolor}
\usepackage{amsmath}
\usepackage{amssymb}
\usepackage{bm}

\usepackage{adjustbox}
\usepackage{multirow}
\usepackage{mathtools}

\usepackage{xspace}

%% file: config/acronyms.tex
\newacronym{MAP}{map}{maximum-a-posteriori}
\newacronym{MLE}{mle}{maximum likelihood estimation}
\newacronym{MNLL}{mnll}{mean negative loglikelihood}
\newacronym{NLL}{nll}{negative loglikelihood}
\newacronym{LL}{ll}{log-likelihood}
\newacronym{RMSE}{rmse}{root mean squared error}
\newacronym{ECE}{ece}{expected calibration error}
\newacronym{FID}{fid}{Fr\'echet Inception Distance}
\newacronym{MSE}{mse}{mean squared error}

\newacronym{PCA}{pca}{principal component analysis}
\newacronym{AE}{ae}{autoencoder}
\newacronym{WAE}{wae}{Wasserstein Autoencoder}
\newacronym{VAE}{vae}{Variational Autoencoder}
\newacronym{BAE}{bae}{Bayesian autoencoder}
\newacronym{CDF}{cdf}{cumulative density function}
\newacronym{GAN}{gan}{Generative Adversarial Network}

\newacronym{MC}{mc}{Monte Carlo}
\newacronym{MCMC}{mcmc}{Markov chain Monte Carlo}
\newacronym{HMC}{hmc}{Hamiltonian Monte Carlo}
\newacronym{MH}{mh}{Metropolis-Hastings}
\newacronym{NUTS}{nuts}{no-u-turn sampler}
\newacronym{SGHMC}{sghmc}{stochastic gradient Hamiltonian Monte Carlo}

\newacronym{DGP}{dgp}{deep Gaussian process} %
\newacronym{GPLVM}{gplvm}{Gaussian process latent variable model}
\newacronym{DPMM}{dpmm}{Dirichlet Process Mixture Model}

\newacronym{VFE}{vfe}{variational free energy}

\newacronym[firstplural=Gaussian Processes]{GP}{gp}{Gaussian Process}

\newacronym{VI}{vi}{variational inference}

\newacronym{ELBO}{elbo}{evidence lower bound}
\newacronym{NELBO}{nelbo}{negative evidence lower bound}
\newacronym{ELL}{ell}{expected log likelihood}
\newacronym{KL}{kl}{Kullback-Leibler divergence}
\newacronym{AUC}{auc}{area under the curve}

\newacronym[firstplural=Bayesian neural networks]{BNN}{bnn}{Bayesian neural network}
\newacronym[firstplural=deep neural networks]{DNN}{dnn}{deep neural network}
\newacronym[]{CNN}{cnn}{convolutional neural network}
\newacronym{MLP}{mlp}{multilayer perceptron}
\newacronym{NN}{nn}{neural network}
\newacronym{RELU}{ReLU}{rectified linear unit}

\newacronym{NF}{nf}{normalizing flow}

\newacronym{RBF}{rbf}{radial basis function}
\newacronym{ARD}{ard}{automatic relevance determination}

\newacronym{RKHS}{rkhs}{reproducing kernel Hilbert space}

\newacronym{OT}{ot}{optimal transport}
\newacronym{WD}{wd}{Wasserstein distance}
\newacronym{SWD}{swd}{sliced-Wasserstein distance}
\newacronym{DSWD}{dswd}{distributional sliced-Wasserstein distance}
\newacronym{BSGPAE}{bsgpae}{Bayesian Sparse Gaussian Process Autoencoder}
\newacronym{GPBAE}{{gp}-{bae}}{Gaussian Process Bayesian Autoencoder}
\newacronym{CVAE}{cvae}{Conditional Variational Autoencoder}
\newacronym{SGPBAE}{{sgp}-{bae}}{Sparse Gaussian Process Bayesian Autoencoder}

%% file: config/math.tex


\DeclarePairedDelimiterX{\infdivx}[2]{[}{]}{%
  #1\;\delimsize\|\;#2%
}



%% file: authors.tex
\author{%
{\mdseries Zi Yang$^{1,*}$\quad Ying Li$^{2,*}$\quad Alejandro Lancho$^3$}\\
{\mdseries Dar\'{i}o C. Larese$^3$\quad Pablo M. Olmos$^{3,\dagger}$\quad Michael Minyi Zhang$^{1,\dagger}$}\\[4pt]
$^1$School of Computing and Data Science, The University of Hong Kong\\
$^2$Department of Statistics and Data Science, National University of Singapore\\
$^3$Department of Signal Theory and Communications, Universidad Carlos III de Madrid\\[3pt]
\texttt{ziyang2023@connect.hku.hk}\quad\texttt{ying-li@nus.edu.sg}\\
\texttt{alancho@ing.uc3m.es}\quad\texttt{dcabezas@tsc.uc3m.es}\\
\texttt{pamartin@ing.uc3m.es}\quad\texttt{mzhang18@hku.hk}\\[3pt]
{\small $^*$Equal contribution.\quad $^\dagger$Corresponding authors.}
}

%% file: context/introduction.tex
\section{Introduction}
Transformers have become the dominant architecture for sequence modeling in natural language processing, vision, and multimodal learning, largely due to the success of self-attention \citep{vaswani2017attention,brown2020language,dosovitskiy2020image}. At its core, self-attention computes pairwise token similarities and uses them to aggregate information. This similarity-based view has motivated a growing line of work that interprets attention as a kernel machine \citep{tsai2019transformer}. Under this kernel perspective, different kernel choices induce different similarity measures for token aggregation, with standard softmax attention corresponding to one particular kernel.

This kernel view has led to substantial progress on efficient attention. Methods such as random feature attention (RKA) and the Performer approximate the softmax kernel using random features, enabling linear-time attention with respect to sequence length \citep{peng2021random,choromanski2020rethinking}. However, these approaches typically rely on fixed or stationary kernels, whose similarity measures do not adapt to the input data. As a result, they are often limited in modeling complex or context dependent (nonstationary) interactions between tokens.

To overcome the rigidity of fixed kernels, several recent works have explored learnable kernel attention under the Bochner's theorem.
These methods learn the spectral density of the kernel and approximate the resulting kernel using random Fourier features (RFFs) for scalable attention.
For example, GMM-RKS parameterizes the spectral density with Gaussian mixtures, while implicit kernel attention (IKAN) learns the spectral density through neural-network (NN) \citep{pal2022learning,song2021implicit,li2019implicit}. Although these methods improve flexibility, GMM-RKS can become costly when many components are needed to model complex spectra, and IKAN lacks an explicit density for principled spectral regularization and may suffer from overfitting or spectral collapse (see Figure~\ref{fig:method_spectra}).  In summary, existing approaches often fall between one of two extremes: fixed kernels are efficient but too restrictive, whereas highly flexible kernels are more expressive but often lack explicit spectral densities for regularization or can become computationally burdensome.

In this paper, we address this gap by proposing ``flow-based spectral kernel attention'' (FSKA), a flexible and efficient framework for learning nonstationary attention kernels.
Our main contributions are summarized as follows:
\begin{itemize}
    \item We formulate nonstationary attention kernel learning as spectral density learning.
    To obtain flexible and stable kernels, we parameterize the spectral density with normalizing flows and regularize it with a structured spectral prior.
    The resulting kernel remains compatible with RFF-based linear attention.
    \item We show that expressive kernel learning enables a simplified attention design with removed query--key projections and fixed orthogonal value projections, resulting in a more parameter-efficient architecture.

    \item We conduct experiments across regression and classification tasks, together with ablation and scalability studies. FSKA achieves competitive or improved performance across these settings, reduces projection-related trainable parameters by approximately $40\%$--$47\%$ compared with most baselines, and maintains a low-growth memory profile close to linear-attention baselines.
\end{itemize}

%% file: context/background.tex
\section{Background}
This section briefly reviews self-attention in Section~\ref{sec:self-attn}, its kernel interpretation in Section~\ref{sec:kernel-attn}, and RFFs for scalable kernel attention in Section~\ref{sec:rff}.

\subsection{Self-Attention}
\label{sec:self-attn}
Given an input sequence 
$\mathbf{X} = [\mathbf{x}_1, \dots, \mathbf{x}_L] \in \mathbb{R}^{L \times d_x}$ 
consisting of $L$ tokens with input embedding dimension $d_x$, 
the self-attention mechanism~\citep{vaswani2017attention} computes pairwise interactions through learned similarity scores.
Specifically, the inputs are linearly projected into queries, keys, and values:
\begin{equation}
    \mathbf{Q} = \mathbf{X} \mathbf{W}_Q,\qquad
    \mathbf{K} = \mathbf{X} \mathbf{W}_K,\qquad
    \mathbf{V} = \mathbf{X} \mathbf{W}_V,
\end{equation}
where 
$\mathbf{W}_Q \in \mathbb{R}^{d_x \times d_q}$,
$\mathbf{W}_K \in \mathbb{R}^{d_x \times d_k}$, and
$\mathbf{W}_V \in \mathbb{R}^{d_x \times d_v}$
are learnable projection matrices.
For dot-product attention, we take $d_q=d_k$, so that 
$\mathbf{Q}\mathbf{K}^{\top}$ is well-defined.
The attention weights are computed by
\begin{equation}
    \mathbf{A} = \operatorname{softmax}\!\left(\mathbf{Q} \mathbf{K}^\top / \sqrt{d_k}\right),
\end{equation}
and the output is given by $\mathbf{Y} = \mathbf{A} \mathbf{V}$.
Multi-head self-attention (MHSA) extends this formulation by computing attention over $H$ heads with independent projections and concatenating their outputs.
Overall, self-attention can be viewed as a learned similarity mechanism between tokens.

\subsection{Kernel View of Attention}
\label{sec:kernel-attn}
A key observation by \citet{tsai2019transformer} is that self-attention admits a kernel interpretation.
Given a non-negative kernel function $\kappa(\cdot, \cdot)$, the generalized attention output at position $i$ can be written as
\begin{equation}
\mathrm{Attn}(\mathbf{X})_i
=
\frac{
\sum_{j=1}^{L} \kappa(\mathbf{q}_i, \mathbf{k}_j)\, \mathbf{v}_j
}{
\sum_{j'=1}^{L} \kappa(\mathbf{q}_i, \mathbf{k}_{j'})
},
\label{eq:kernel_attn}
\end{equation}
where $\mathbf{q}_i = \mathbf{x}_i \mathbf{W}_Q$, $\mathbf{k}_j = \mathbf{x}_j \mathbf{W}_K$, and $\mathbf{v}_j = \mathbf{x}_j \mathbf{W}_V$.
When $\kappa(\mathbf{q}, \mathbf{k}) = \exp(\mathbf{q} \mathbf{k}^\top / \sqrt{d_k})$, Eq.~\eqref{eq:kernel_attn} reduces to standard softmax attention.
This formulation reveals that attention is fundamentally a kernel smoother operating on token representations.
However, computing Eq.~\eqref{eq:kernel_attn} naively requires $\mathcal{O}(L^2)$ time and memory, which limits scalability to long sequences \citep{katharopoulos2020transformers}.

\subsection{Random Fourier Features for Kernel Attention}
\label{sec:rff}
Random Fourier features (RFF)~\citep{rahimi2007random} provide a scalable approximation to shift-invariant kernels by expressing them as inner products in a randomized feature space.
By Bochner's theorem, any continuous, shift-invariant kernel 
$\kappa(\mathbf{x}, \mathbf{y}) = \kappa(\mathbf{x}-\mathbf{y})$ 
can be represented as the Fourier transform of a non-negative spectral density.
In practice, RFF approximates such kernels by sampling random frequencies from the associated spectral distribution and constructing an explicit feature map.

Concretely, let 
$\boldsymbol{\phi} : \mathbb{R}^{d_{x}} \rightarrow \mathbb{R}^{2M}$ 
be a nonlinear feature map defined as
\begin{equation}
\boldsymbol{\phi}(\mathbf{x})
=
\frac{1}{\sqrt{M}}
\begin{bmatrix}
\sin(\mathbf{x} \mathbf{w}_1^\top),\,
\dots,\,
\sin(\mathbf{x} \mathbf{w}_M^\top),\,
\cos(\mathbf{x} \mathbf{w}_1^\top),\,
\dots,\,
\cos(\mathbf{x} \mathbf{w}_M^\top)
\end{bmatrix} ,
\label{eq:rff_phi}
\end{equation}
where the random vectors 
$\{\mathbf{w}_m\}_{m=1}^{M},\mathbf{w}_m \in \mathbb{R}^{1 \times d_x}$ 
are independently sampled from a spectral density $p(\mathbf{w})$ associated with the target kernel.
Under this construction, the inner product of random features provides an unbiased approximation to the kernel:
\begin{equation}
\mathbb{E}_{\mathbf{w}}
\!\left[
\boldsymbol{\phi}(\mathbf{x})^\top \boldsymbol{\phi}(\mathbf{y})
\right]
=
\kappa(\mathbf{x}, \mathbf{y}).
\label{eq:rff_expectation}
\end{equation}

This approximation allows the attention kernel to be written as
$\kappa(\mathbf{q}, \mathbf{k}) \approx \boldsymbol{\phi}(\mathbf{q})^\top \boldsymbol{\phi}(\mathbf{k})$,
which enables linear-time attention \citep{peng2021random,choromanski2020rethinking}.
However, for methods such as RKA and Performer, the random frequencies are drawn from a predetermined distribution \(p(\mathbf{w})\), so the induced spectral density cannot adapt to the data.
Moreover, the resulting kernel is stationary, depending only on $\mathbf{q}-\mathbf{k}$ rather than input-dependent patterns.
As a result, such RFF-based attention mechanisms are not sufficiently flexible to capture complex or nonstationary patterns in real-world sequences.

%% file: context/method.tex
\section{Method}
\label{sec:method}
In this section, we first introduce the nonstationary attention kernel and its random-feature approximation in Section~\ref{sec:nonstationary_kernel}.
We then propose flow-based spectral density modeling for flexible and regularized kernel learning in Section~\ref{sec:flow_spectral}.
Finally, we show how kernel expressiveness simplifies attention design by removing query--key projections, using fixed value projections in Section~\ref{sec:value_representation}, and extending the resulting module to multi-head attention in Section~\ref{sec:multi_head_kernel_attention}.

\subsection{Nonstationary Attention Kernel}
\label{sec:nonstationary_kernel}
Most existing kernelized attention methods rely on stationary kernels (e.g., $k(\mathbf{x}_1, \mathbf{x}_2) = \kappa(\mathbf{x}_1 - \mathbf{x}_2)$) derived from Bochner’s theorem,
which limits expressiveness and prevents attention from capturing input-dependent, nonstationary patterns.
An effective attention kernel should therefore be flexible enough to model complex similarities while remaining scalable with RFFs for linear-time computation.
To this end, we build on a universal Bochner theorem that establishes a duality between
continuous kernels and bivariate spectral measures \citep{yang2025multi}.
\begin{theorem}[Universal Bochner's Theorem]
\label{theo:Uni_Bochner}
A complex-valued bounded continuous kernel \( k(\mathbf{x}_1, \mathbf{x}_2) \) on \( \mathbb{R}^{d_x} \) is the covariance function of a mean square continuous complex-valued random process on \( \mathbb{R}^{d_x} \) if and only if
\begin{equation}
\label{eq:ubo}
\begin{aligned}
&k(\mathbf{x}_1,\mathbf{x}_2)
=
\frac{1}{4}
\int
\Big(
\exp\!\left(i(\mathbf{x}_1 \boldsymbol{\omega}_1^\top
-\mathbf{x}_2 \boldsymbol{\omega}_2^\top)\right)
+
\exp\!\left(i(\mathbf{x}_1 \boldsymbol{\omega}_2^\top
-\mathbf{x}_2 \boldsymbol{\omega}_1^\top)\right)
\\
&\qquad\qquad
+
\exp\!\left(i(\mathbf{x}_1 \boldsymbol{\omega}_1^\top
-\mathbf{x}_2 \boldsymbol{\omega}_1^\top)\right)
+
\exp\!\left(i(\mathbf{x}_1 \boldsymbol{\omega}_2^\top
-\mathbf{x}_2 \boldsymbol{\omega}_2^\top)\right)
\Big)
\,u(\mathrm{d}\boldsymbol{\omega}_1,\mathrm{d}\boldsymbol{\omega}_2).
\end{aligned}
\end{equation}
where \( u \) is the Lebesgue-Stieltjes measure associated with some function \( p\left(\boldsymbol{\omega}_1, \boldsymbol{\omega}_2\right) \). When \(\boldsymbol{\omega}_1 = \boldsymbol{\omega}_2\), this theorem reduces to Bochner's theorem.
\end{theorem}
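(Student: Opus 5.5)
This is a Loève-type harmonizable characterization; I will reduce it to the bivariate (Loève) spectral representation of harmonizable processes and then symmetrize.

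\textbf{Sufficiency.} Assume $k$ has the form \eqref{eq:ubo} with $u$ a positive-definite bimeasure, in the sense that $\sum_{a,b} c_a\bar c_b\int f_a(\boldsymbol{\omega}_1)\overline{f_b(\boldsymbol{\omega}_2)}\,u(\mathrm{d}\boldsymbol{\omega}_1,\mathrm{d}\boldsymbol{\omega}_2)\ge 0$. Introduce the symmetrized measure $\tilde u=\tfrac14(u+u^{\sigma}+\pi_1 u+\pi_2 u)$. Here $\sigma$ swaps the two coordinates, and $\pi_j$ pushes $u$ onto the diagonal through $\boldsymbol{\omega}_j$. Then \eqref{eq:ubo} reads
\[
k(\mathbf{x}_1,\mathbf{x}_2)=\int e^{i(\mathbf{x}_1\boldsymbol{\omega}_1^\top-\mathbf{x}_2\boldsymbol{\omega}_2^\top)}\,\tilde u(\mathrm{d}\boldsymbol{\omega}_1,\mathrm{d}\boldsymbol{\omega}_2).
\]
For any points $\mathbf{x}_a$ and coefficients $c_a$, put $f(\boldsymbol{\omega})=\sum_a c_a e^{i\mathbf{x}_a\boldsymbol{\omega}^\top}$. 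Then $\sum_{a,b}c_a\bar c_b k(\mathbf{x}_a,\mathbf{x}_b)=\int f(\boldsymbol{\omega}_1)\overline{f(\boldsymbol{\omega}_2)}\,\tilde u\ge0$, so $k$ is positive semidefinite. Boundedness follows from finite total variation of $u$, and continuity from dominated convergence. Kolmogorov's extension then gives a complex Gaussian process with covariance $k$. Mean-square continuity follows from continuity of $k$ on the diagonal via $\mathbb{E}|Z(\mathbf{x})-Z(\mathbf{y})|^2=k(\mathbf{x},\mathbf{x})-2\Re k(\mathbf{x},\mathbf{y})+k(\mathbf{y},\mathbf{y})$.

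\textbf{Necessity.} Here I use the setting implicitly assumed in \citet{yang2025multi}: the process is harmonizable. In that case Loève's theorem gives a random spectral measure $Z(\mathrm{d}\boldsymbol{\omega})$ with $X(\mathbf{x})=\int e^{i\mathbf{x}\boldsymbol{\omega}^\top}Z(\mathrm{d}\boldsymbol{\omega})$. Its bimeasure $F(A,B)=\mathbb{E}[Z(A)\overline{Z(B)}]$ yields $k(\mathbf{x}_1,\mathbf{x}_2)=\int e^{i(\mathbf{x}_1\boldsymbol{\omega}_1^\top-\mathbf{x}_2\boldsymbol{\omega}_2^\top)}F(\mathrm{d}\boldsymbol{\omega}_1,\mathrm{d}\boldsymbol{\omega}_2)$. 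To match the four-term form, I would choose $u$ so that its symmetrization $\tilde u$ above equals $F$. Hermitian symmetry of $k$ gives $F^{\sigma}=\bar F$, so one takes $u$ to be the off-diagonal part of $F$, rescaled by $2$. The diagonal terms are then compensated by placing the diagonal mass of $F$ into the marginal pieces. Finally, $u$ is identified as the Lebesgue--Stieltjes measure of its distribution function $p(\boldsymbol{\omega}_1,\boldsymbol{\omega}_2)=u((-\infty,\boldsymbol{\omega}_1]\times(-\infty,\boldsymbol{\omega}_2])$.

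\textbf{Reduction claim.} If $u$ is concentrated on the diagonal $\boldsymbol{\omega}_1=\boldsymbol{\omega}_2$, all four exponentials coincide with $e^{i(\mathbf{x}_1-\mathbf{x}_2)\boldsymbol{\omega}^\top}$. The kernel is then stationary, and \eqref{eq:ubo} is exactly Bochner's theorem.

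\textbf{Main obstacle.} The hardest step is necessity, for two reasons. A general bounded continuous covariance need not be harmonizable, so the statement must be read under that hypothesis. Moreover, Loève's $F$ is in general only a bimeasure, so one must work with Morse--Transue/strict harmonizability integrals. I would first try the strongly harmonizable case, where $F$ has bounded variation, and then argue that the algebraic inversion from $F$ to $u$ preserves positive-definiteness.
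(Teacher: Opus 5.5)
The paper gives no proof of this statement. It quotes it from \citet{yang2025multi}, and only the ``if'' direction, with $u$ a probability measure, is used later in Theorem~\ref{theo:Non_stationary_RFF}. Judged on its own, your sufficiency argument rests on the wrong positivity hypothesis. With $\tilde u=\tfrac14(u+u^{\sigma}+\pi_1u+\pi_2u)$, nonnegativity of $\int f(\boldsymbol{\omega}_1)\overline{f(\boldsymbol{\omega}_2)}\,\tilde u$ requires $\tilde u$, not $u$, to be positive definite. The marginal pieces contribute $\tfrac14\int(|f(\boldsymbol{\omega}_1)|^2+|f(\boldsymbol{\omega}_2)|^2)\,u$, and this can be negative because the marginals of a positive-definite bimeasure can be negative. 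Concretely, take $u=\delta_{(a,a)}+10\,\delta_{(b,b)}-3\,\delta_{(a,b)}-3\,\delta_{(b,a)}$ with $a\neq b$. It is positive definite, since its $2\times2$ matrix has positive diagonal and determinant $1$. Yet $\tilde u$ has mass $1-\tfrac34-\tfrac34=-\tfrac12$ at $(a,a)$. Testing with a trigonometric polynomial $f$ satisfying $f(a)=1$, $f(b)=0$ shows that $k$ is not positive semidefinite. What does work is $u\ge0$. The four-term integrand equals $\tfrac14\,g_{\boldsymbol{\omega}}(\mathbf{x}_1)\overline{g_{\boldsymbol{\omega}}(\mathbf{x}_2)}$ with $g_{\boldsymbol{\omega}}(\mathbf{x})=e^{i\mathbf{x}\boldsymbol{\omega}_1^\top}+e^{i\mathbf{x}\boldsymbol{\omega}_2^\top}$, so the quadratic form equals $\tfrac14\int|\sum_a c_a g_{\boldsymbol{\omega}}(\mathbf{x}_a)|^2\,u\ge0$. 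This is exactly the factorization behind the feature map of Theorem~\ref{theo:Non_stationary_RFF}. Your remaining steps (boundedness, continuity, the Gaussian construction, mean-square continuity) and the diagonal reduction to Bochner are fine.

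The necessity step breaks, and it cannot be repaired. Uniqueness would force $\tilde u=F$, and $\tilde u$ is always $\sigma$-invariant, so you need $F^\sigma=F$. Hermitian symmetry gives only $F^\sigma=\bar F$. Your choice $u_{\mathrm{off}}=2F_{\mathrm{off}}$ therefore yields $\tfrac14(u_{\mathrm{off}}+u_{\mathrm{off}}^\sigma)=\Re F_{\mathrm{off}}$, which discards the imaginary part. No other $u$ helps. The integrand is invariant under $(\mathbf{x}_1,\mathbf{x}_2)\mapsto(-\mathbf{x}_2,-\mathbf{x}_1)$: the first two exponentials swap and the last two are fixed. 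Hence every kernel of this form, for any complex $u$, satisfies $k(\mathbf{x}_1,\mathbf{x}_2)=k(-\mathbf{x}_2,-\mathbf{x}_1)$, and in particular $k(\mathbf{x},\mathbf{x})=k(-\mathbf{x},-\mathbf{x})$. The process $X(x)=\xi(1+\sin x)$ on $\mathbb{R}$ is bounded, mean-square continuous and strongly harmonizable. Yet $k(\pi/2,\pi/2)=4\neq0=k(-\pi/2,-\pi/2)$, so the harmonizability hypothesis you propose does not rescue the ``only if'' direction.

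A sign obstruction remains even when $F$ is real and symmetric, as for $k(x_1,x_2)=\sin x_1\sin x_2$. Uniqueness of Fourier--Stieltjes transforms forces $\tilde u=F$, which has atoms $-\tfrac14$ at $(1,-1)$ and $(-1,1)$. Hence $u$ must be signed, whereas sufficiency needs $u\ge0$. No single class of $u$ makes the stated equivalence true. What can be proved is the ``if'' direction for nonnegative finite $u$, together with the diagonal reduction, and that is all the paper actually uses.
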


Theorem~\ref{theo:Uni_Bochner} shows that learning a nonstationary continuous kernel can be viewed as learning its associated bivariate spectral density
\(p(\boldsymbol{\omega}_1,\boldsymbol{\omega}_2)\). Moreover, this formulation enables the design of flexible kernels that allow attention mechanisms to capture input-dependent patterns.

Next, we state that this duality allows us to derive a general closed-form random feature approximation for arbitrary continuous kernels, thereby enabling linear-time attention mechanisms.
\begin{theorem}
\label{theo:Non_stationary_RFF}
Let $\boldsymbol{\phi}(\mathbf{x})$ be the randomized feature map of $k(\mathbf{x}_1, \mathbf{x}_2)$, defined as follows:
\begin{equation}
\label{eq:rff_def}
\boldsymbol{\phi}(\mathbf{x})
=
\sqrt{\frac{1}{2M}}
\begin{bmatrix}
\cos \left( \mathbf{x} \boldsymbol{\omega}_{1}^{(1:M/2)\top} \right)
+ \cos \left( \mathbf{x} \boldsymbol{\omega}_{2}^{(1:M/2)\top} \right) \\
\sin \left( \mathbf{x} \boldsymbol{\omega}_{1}^{(1:M/2)\top} \right)
+ \sin \left( \mathbf{x} \boldsymbol{\omega}_{2}^{(1:M/2)\top} \right)
\end{bmatrix}
\in \mathbb{R}^{M},
\end{equation}
where the spectral frequencies 
$\{ [\boldsymbol{\omega}_1^{(m)}, \boldsymbol{\omega}_2^{(m)} ]\}_{m=1}^{M/2}
\subset \mathbb{R}^{1 \times 2d_x}$
are independent and identically distributed random row vectors drawn from the spectral density
$p(\boldsymbol{\omega}_1, \boldsymbol{\omega}_2)$.
Then, the kernel $k(\mathbf{x}_1,\mathbf{x}_2)$ admits the unbiased estimator
\begin{equation}
\label{eq:kernel_app}
k(\mathbf{x}_1, \mathbf{x}_2)
\;\approx\;
\boldsymbol{\phi}(\mathbf{x}_1)^\top  \boldsymbol{\phi}(\mathbf{x}_2) .
\end{equation}
\end{theorem}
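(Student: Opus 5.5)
The plan is to compute the expectation of $\boldsymbol{\phi}(\mathbf{x}_1)^\top\boldsymbol{\phi}(\mathbf{x}_2)$ directly and match it with the real part of the integral in Theorem~\ref{theo:Uni_Bochner}. We take $p$ to be a probability density, so $u$ is normalized. We also assume $k$ is real-valued, which is the relevant case for attention.

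\textbf{Step 1: reduce to a single frequency pair.} Write the inner product as a sum over the $M/2$ frequency pairs, each contributing a cosine product and a sine product. This gives
\begin{equation*}
\boldsymbol{\phi}(\mathbf{x}_1)^\top\boldsymbol{\phi}(\mathbf{x}_2)=\frac{1}{2M}\sum_{m=1}^{M/2}\sum_{a,b\in\{1,2\}}\Big(\cos(\mathbf{x}_1\boldsymbol{\omega}_a^{(m)\top})\cos(\mathbf{x}_2\boldsymbol{\omega}_b^{(m)\top})+\sin(\mathbf{x}_1\boldsymbol{\omega}_a^{(m)\top})\sin(\mathbf{x}_2\boldsymbol{\omega}_b^{(m)\top})\Big).
\end{equation*}
The identity $\cos A\cos B+\sin A\sin B=\cos(A-B)$ collapses each summand to $\cos(\mathbf{x}_1\boldsymbol{\omega}_a^{(m)\top}-\mathbf{x}_2\boldsymbol{\omega}_b^{(m)\top})$. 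The four pairs $(a,b)$ correspond exactly to the four exponentials in \eqref{eq:ubo}: $(1,2)$, $(2,1)$, $(1,1)$ and $(2,2)$.

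\textbf{Step 2: take expectations.} The pairs are i.i.d. draws from $p$, so by linearity the expectation equals $\frac{1}{2M}\cdot\frac{M}{2}=\frac14$ times
\begin{equation*}
\sum_{a,b}\int \cos(\mathbf{x}_1\boldsymbol{\omega}_a^\top-\mathbf{x}_2\boldsymbol{\omega}_b^\top)\,u(\mathrm{d}\boldsymbol{\omega}_1,\mathrm{d}\boldsymbol{\omega}_2).
\end{equation*}
All integrands are bounded by $1$, so the integrals exist.

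\textbf{Step 3: identify the kernel.} Comparing with \eqref{eq:ubo}, this quantity is exactly $\mathrm{Re}\,k(\mathbf{x}_1,\mathbf{x}_2)$. The normalizing factors match: $\frac14$ in the theorem and $\frac{1}{2M}\cdot\frac{M}{2}$ here.

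\textbf{Main obstacle: the imaginary part.} The estimator targets the real part, so we must show $k$ is real, i.e. that the imaginary part $\frac14\int\sum_{a,b}\sin(\cdots)\,\mathrm{d}u$ vanishes. The approach is to use the symmetry of $u$ under $(\boldsymbol{\omega}_1,\boldsymbol{\omega}_2)\mapsto(-\boldsymbol{\omega}_1,-\boldsymbol{\omega}_2)$. This symmetry follows from $k$ being real via uniqueness of the spectral representation, or we impose it on $p$ by symmetrizing the flow. Under it, the sine integrals cancel because sine is odd.

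If one prefers not to impose symmetry, we instead state the result as an unbiased estimator of $\mathrm{Re}\,k$, which equals $k$ for real kernels. Unbiasedness then holds for every $M$. By the strong law of large numbers, the estimator also converges almost surely as $M\to\infty$, which justifies the approximation in \eqref{eq:kernel_app}.
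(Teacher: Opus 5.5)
Your proof is correct and is essentially the paper's argument run in reverse. The paper takes the real part of \eqref{eq:ubo}, forms the Monte Carlo average over the $M/2$ i.i.d.\ pairs, and factors it into $\boldsymbol{\phi}(\mathbf{x}_1)^\top\boldsymbol{\phi}(\mathbf{x}_2)$ via $\cos(a-b)=\cos a\cos b+\sin a\sin b$; you expand the inner product first and then take expectations, with the same $\frac{1}{2M}\cdot\frac{M}{2}=\frac14$ bookkeeping and the same restriction to real-valued $k$.

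Under that restriction your ``main obstacle'' disappears: since $u\ge 0$, taking real parts of both sides of \eqref{eq:ubo} gives the cosine form directly, which is exactly the paper's step. You should also drop the claim that a real $k$ forces $u$ to be invariant under $(\boldsymbol{\omega}_1,\boldsymbol{\omega}_2)\mapsto(-\boldsymbol{\omega}_1,-\boldsymbol{\omega}_2)$ by uniqueness of the representation. The representation is not unique: $u=\delta_{(\mathbf{w},-\mathbf{w})}$ with $\mathbf{w}\neq\mathbf{0}$ gives the real kernel $\cos(\mathbf{x}_1\mathbf{w}^\top)\cos(\mathbf{x}_2\mathbf{w}^\top)$ without that symmetry, and the same holds for an isotropic Gaussian density centred at $(\mathbf{w},-\mathbf{w})$.
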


\begin{proof}
    See Appendix~\ref{app:Non_stationary_RFF} for the full proof.
\end{proof}

Theorem~\ref{theo:Non_stationary_RFF} converts kernel evaluation into feature inner products, providing the key step toward scalable linear attention.

We next derive a nonstationary attention mechanism based on RFFs,
which is both flexible and scalable.
The nonstationary attention output at position $i$ is defined as
\begin{equation}
\label{eq:ns_attn_def}
\begin{aligned}
\mathrm{Attn}(\mathbf{X})_i
&=
\frac{
\sum_{j=1}^{L} k(\mathbf{q}_i, \mathbf{k}_j)\, \mathbf{v}_j
}{
\sum_{j'=1}^{L} k(\mathbf{q}_i, \mathbf{k}_{j'})
}
\approx
\frac{
\sum_{j=1}^{L} \boldsymbol{\phi}(\mathbf{q}_i)^\top \boldsymbol{\phi}(\mathbf{k}_j)\, \mathbf{v}_j
}{
\sum_{j'=1}^{L} \boldsymbol{\phi}(\mathbf{q}_i)^\top \boldsymbol{\phi}(\mathbf{k}_{j'})
}\\
&=\frac{
\boldsymbol{\phi}(\mathbf{q}_i)^\top 
\sum_{j=1}^{L} \big( \boldsymbol{\phi}(\mathbf{k}_j)\, \mathbf{v}_j \big)
}{
\boldsymbol{\phi}(\mathbf{q}_i)^\top 
\sum_{j'=1}^{L} \boldsymbol{\phi}(\mathbf{k}_{j'})
}.
\end{aligned}
\end{equation}
Here, $k(\cdot,\cdot)$ denotes the proposed nonstationary kernel and
$\boldsymbol{\phi}(\cdot)$ is the random feature map defined in Eq.~\eqref{eq:rff_def}.
The last equality follows by exchanging the order of the inner products and summations, which implies the attention can be computed in linear time with respect to the sequence length $L$. 

However, the sine and cosine components within the $\boldsymbol{\phi}$ in Eq.~\eqref{eq:ns_attn_def} can take both positive and negative values.
Under a finite number of random features, this may cause cancellation in the linear-attention denominator
\(\boldsymbol{\phi}(\mathbf{q}_i)^\top \sum_{j=1}^{L}\boldsymbol{\phi}(\mathbf{k}_j)\),
leading to small or even negative normalization values.
As shown in Appendix~\ref{app:prf_stability}, increasing the random feature dimension \(M\) reduces the measured fraction of negative or relatively small denominators overall, suggesting that this issue can be alleviated with more samples.
Nevertheless, near-zero denominators may still occur in practical finite-sample settings, especially when \(M\) is limited for computational efficiency.
As a simple way to address this issue, we apply a positive feature transformation before computing attention:
\begin{equation}
\label{eq:positive_feature}
\widetilde{\boldsymbol{\phi}}(\mathbf{x}) = g(\boldsymbol{\phi}(\mathbf{x})),
\qquad
g(u)=
\begin{cases}
u+1, & u>0,\\
\exp(u), & u\le 0,
\end{cases}
\end{equation}
where \(g\) corresponds to the exponential linear unit (ELU) shifted by one, i.e., \(g(u)=\mathrm{ELU}(u)+1\).
This ensures \(\widetilde{\boldsymbol{\phi}}(\mathbf{x})\geq 0\) element-wise.
We then use \(\widetilde{\boldsymbol{\phi}}\) in place of \(\boldsymbol{\phi}\) in Eq.~\eqref{eq:ns_attn_def},
which yields a nonnegative denominator by construction and improves numerical stability.
The induced affinity is $\widetilde{k}_\theta(x,y)=\widetilde{\phi}_\theta(x)^\top\widetilde{\phi}_\theta(y)$. Theorem~\ref{theo:Non_stationary_RFF} applies only to the signed features before ELU+1; it does not establish unbiasedness of the transformed affinity or the normalized attention ratio. Appendix~\ref{app:prf_gap} characterizes this transformed kernel and quantifies its difference from the signed-feature similarity.

\subsection{Flow-based Spectral Density Modeling}
\label{sec:flow_spectral}
\begin{figure*}[t]
    \centering
    \includegraphics[width=1.0\textwidth]{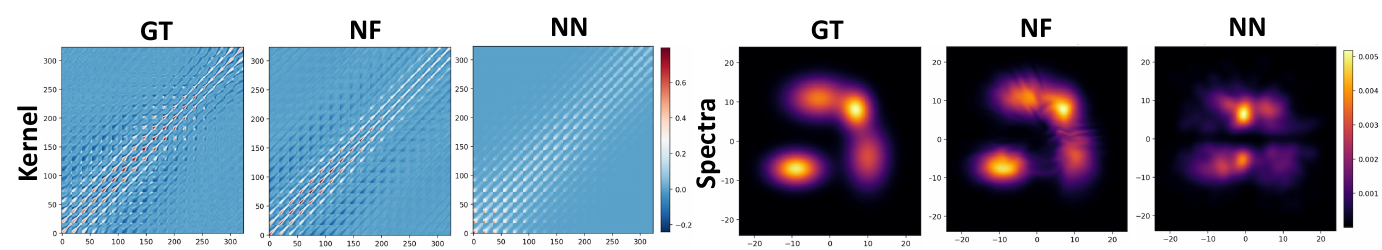}
    \caption{
    Visualization of kernels and spectral densities \(p_(\boldsymbol{\omega}_1,\boldsymbol{\omega}_2)\) under NF-based and NN-based implicit parameterizations.
    NF-based spectral modeling recovers more structured spectra and induces more faithful kernels.
    }
    \label{fig:method_spectra}
\end{figure*}

As established in Section~\ref{sec:nonstationary_kernel}, learning expressive nonstationary attention kernels can be reformulated as learning a bivariate spectral density
\(p(\boldsymbol{\omega}_1,\boldsymbol{\omega}_2)\).
Prior work such as GMM-RKS~\citep{pal2022learning} models the spectral distribution using Gaussian mixture models.
Although this improves expressiveness over fixed kernels, the flexibility and computational cost depend heavily on the number of mixture components, which may limit scalability in practice.
IKAN~\citep{song2021implicit} further generalizes kernelized attention through implicit neural parameterization.
However, the absence of an explicit spectral density hinders principled regularization, and may lead to spectral collapse, thereby degrading model performance.
As shown in Figure~\ref{fig:method_spectra}, NN-based implicit spectral parameterization may fail to recover the underlying spectral structure, leading to poorly learned kernels.

To address these limitations, we need to design a spectral modeling framework that is expressive and scalable while retaining an explicit and regularizable spectral representation.
We therefore introduce a flow-based approach for spectral density learning.
Normalizing flows (NFs) define flexible probability distributions by transforming a simple base distribution through a sequence of invertible and differentiable mappings.
Formally, we first sample a latent variable
\begin{equation}
\mathbf{z} \sim p_0(\mathbf{z}) = \mathcal{N}(\mathbf{0}, \mathbf{I}),
\qquad \mathbf{z}\in\mathbb{R}^{2d_x},
\end{equation}
and obtain the spectral pair through an invertible transformation
\begin{equation}
(\boldsymbol{\omega}_1,\boldsymbol{\omega}_2) = F_\theta(\mathbf{z}),
\qquad 
\boldsymbol{\omega}_1,\boldsymbol{\omega}_2 \in \mathbb{R}^{d_x}.
\end{equation}
where $F_{\theta}$ denotes a composition of $N_f$ invertible and differentiable transformations.
By the change-of-variables formula, the induced spectral density is given by
\begin{equation}
p_{\theta}(\boldsymbol{\omega}_1, \boldsymbol{\omega}_2)
=
p_0(\mathbf{z})
\left|
\det
\left(
\frac{\partial F_{\theta}(\mathbf{z})}{\partial \mathbf{z}}
\right)
\right|^{-1}.
\end{equation}
In this work, we parameterize $F_{\theta}$ with RealNVP transformations, which provide an efficient architecture with tractable Jacobian determinants \citep{dinh2016density}.
Therefore, sampling spectral pairs for random feature construction reduces to drawing $\mathbf{z}\sim p_0(\mathbf{z})$ and applying
$(\boldsymbol{\omega}_1,\boldsymbol{\omega}_2)=F_{\theta}(\mathbf{z})$.

To avoid overfitting the spectral density, we further introduce a spectral regularizer.
Specifically, we regularize the learned spectral density
\(p_{\theta}(\boldsymbol{\omega}_1,\boldsymbol{\omega}_2)\)
using the NG-SM spectral density as a reference prior
\(p_{\mathrm{NGSM}}(\boldsymbol{\omega}_1,\boldsymbol{\omega}_2)\),
which is defined as a bivariate Gaussian mixture~\citep{yang2025multi}.
The regularization term is
\begin{equation}
\mathcal{L}_{\mathrm{spec}}
=
\mathrm{KL}
\left(
p_{\theta}(\boldsymbol{\omega}_1,\boldsymbol{\omega}_2)
\,\|\, 
p_{\mathrm{NGSM}}(\boldsymbol{\omega}_1,\boldsymbol{\omega}_2)
\right),
\end{equation}
and is estimated by Monte Carlo samples from
\(p_{\theta}(\boldsymbol{\omega}_1,\boldsymbol{\omega}_2)\).
This term acts as a soft regularizer that stabilizes spectral learning while keeping the flow parameterization flexible.

Finally, with the flexible and stable flow-based spectral kernel attention (FSKA) introduced above, we further ask \emph{whether the expressiveness of the kernel can simplify the attention architecture itself}. We turn to this question next.

\subsection{Simplifying Attention via Kernel Expressiveness}
Standard attention uses learnable query, key, and value projections together with a fixed kernel function to model token interactions, resulting in a large number of projection parameters and reduced efficiency in long-sequence settings. We next simplify FSKA by removing the query and key projections and fixing the value projection.
Specifically, the learned kernel is expressive enough to model token similarity directly on the input representations, so explicit query and key projections can be removed. 
For the value branch, we use a simple fixed projection to provide a stable aggregation space, while leaving adaptive token interactions to the learned kernel.

\subsubsection{Removing Query and Key Projections}
In standard attention, input representations $\mathbf{X}$ are first projected into query and key spaces by learnable matrices, and the attention matrix is determined by a fixed similarity function evaluated on $(\mathbf{q}_i, \mathbf{k}_j)$.  In other words, token interactions are jointly determined by the learned projections and the predefined similarity function.

In contrast, our formulation models token interactions directly with a flexible kernel defined on the input space. 
When the kernel family is sufficiently expressive, the projected similarity can be represented directly as kernel acting on the original input representations $\mathbf{X}$.
This enables removing explicit query and key projections while preserving the expressiveness of kernel-based similarity modeling.
We formalize this observation below.

\begin{theorem}[Absorption of Query--Key Projections]
\label{thm:qk_absorption}
Let $\mathcal{X} \subset \mathbb{R}^{d_x}$ be a compact set. 
Let $\mathcal{K}$ be a family of functions 
$\tilde{k}: \mathcal{X} \times \mathcal{X} \rightarrow \mathbb{R}$ 
that is uniformly dense in $C(\mathcal{X} \times \mathcal{X})$.

Let $k \in C(\mathcal{U} \times \mathcal{V})$ be any continuous similarity function, 
where $\mathcal{U}$ and $\mathcal{V}$ are compact sets. 
Let $\mathbf{W}_Q \in \mathbb{R}^{d_x \times d_q}$ and 
$\mathbf{W}_K \in \mathbb{R}^{d_x \times d_k}$ be linear mappings such that 
$\mathcal{X} \mathbf{W}_Q \subset \mathcal{U}$ and 
$\mathcal{X} \mathbf{W}_K \subset \mathcal{V}$. 
Define
\begin{equation}
f(\mathbf{x}_1, \mathbf{x}_2) 
=
k(\mathbf{x}_1 \mathbf{W}_Q, \mathbf{x}_2 \mathbf{W}_K).
\end{equation}
Then, for any $\epsilon > 0$, there exists 
$\tilde{k}_{\epsilon} \in \mathcal{K}$ such that
\begin{equation}
\sup_{\mathbf{x}_1,\mathbf{x}_2 \in \mathcal{X}}
\left| 
f(\mathbf{x}_1, \mathbf{x}_2) 
- 
\tilde{k}_{\epsilon}(\mathbf{x}_1, \mathbf{x}_2) 
\right|
\le \epsilon .
\end{equation}
\end{theorem}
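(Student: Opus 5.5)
The plan is to show that $f$ is itself a continuous function on the compact set $\mathcal{X}\times\mathcal{X}$. The conclusion then follows directly from the assumed uniform density of $\mathcal{K}$ in $C(\mathcal{X}\times\mathcal{X})$. The argument therefore reduces to a continuity check plus one appeal to the definition of density.

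\textbf{Step 1: $f$ is well defined.} For $\mathbf{x}_1,\mathbf{x}_2\in\mathcal{X}$ we have $\mathbf{x}_1\mathbf{W}_Q\in\mathcal{U}$ and $\mathbf{x}_2\mathbf{W}_K\in\mathcal{V}$ by the hypotheses $\mathcal{X}\mathbf{W}_Q\subset\mathcal{U}$ and $\mathcal{X}\mathbf{W}_K\subset\mathcal{V}$. Hence $f(\mathbf{x}_1,\mathbf{x}_2)=k(\mathbf{x}_1\mathbf{W}_Q,\mathbf{x}_2\mathbf{W}_K)$ is defined on all of $\mathcal{X}\times\mathcal{X}$.

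\textbf{Step 2: $f$ is continuous.} Consider the map
\[
T:\mathcal{X}\times\mathcal{X}\to\mathcal{U}\times\mathcal{V},\qquad T(\mathbf{x}_1,\mathbf{x}_2)=(\mathbf{x}_1\mathbf{W}_Q,\mathbf{x}_2\mathbf{W}_K).
\]
It is the restriction of a linear map between finite-dimensional spaces, so it is Lipschitz with constant $\max(\|\mathbf{W}_Q\|,\|\mathbf{W}_K\|)$ in operator norm. In particular $T$ is continuous. Since $f=k\circ T$ and $k\in C(\mathcal{U}\times\mathcal{V})$, $f$ is continuous as a composition of continuous maps, so $f\in C(\mathcal{X}\times\mathcal{X})$.

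Compactness of $\mathcal{U}$ and $\mathcal{V}$ is not needed for this step. Its only role is to make $k$ bounded and uniformly continuous, and that is already guaranteed here because $T(\mathcal{X}\times\mathcal{X})$ is compact and $k$ is continuous on it.

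\textbf{Step 3: apply density.} Uniform density of $\mathcal{K}$ in $C(\mathcal{X}\times\mathcal{X})$ means that for every $g\in C(\mathcal{X}\times\mathcal{X})$ and every $\epsilon>0$ there is $\tilde{k}\in\mathcal{K}$ with $\|g-\tilde{k}\|_\infty\le\epsilon$. Taking $g=f$ gives the desired $\tilde{k}_\epsilon$ and the supremum bound, which completes the argument.

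\textbf{Main obstacle.} There is no genuine obstacle: the theorem is essentially a reformulation of the density hypothesis. The only point needing care is making sure that $f$ lies in the space where density is asserted. That requires both the range inclusions, so that $f$ is defined, and continuity of the composition, so that $f\in C(\mathcal{X}\times\mathcal{X})$.

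A separate and harder question is whether the concrete FSKA family, obtained through Theorem~\ref{theo:Uni_Bochner} with flow-parameterized spectral densities, actually satisfies the density hypothesis. That question is not addressed here; the theorem simply assumes it.
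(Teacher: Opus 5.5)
Your proposal is correct and matches the paper's proof step for step: you show $f=k\circ T$ is continuous on $\mathcal{X}\times\mathcal{X}$ through the continuous product map $T(\mathbf{x}_1,\mathbf{x}_2)=(\mathbf{x}_1\mathbf{W}_Q,\mathbf{x}_2\mathbf{W}_K)$, and then you invoke the uniform density of $\mathcal{K}$ in $C(\mathcal{X}\times\mathcal{X})$. Your remark that compactness of $\mathcal{U}$ and $\mathcal{V}$ is not needed is also accurate; the paper notes that the images are compact but never uses this.
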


\begin{proof}
    See Appendix~\ref{app:proof_qk_absorption} for the full proof.
\end{proof}

Theorem~\ref{thm:qk_absorption} is conditional on its uniform-density assumption and does not establish that a finite implementation satisfies that assumption. Under this assumption, any similarity function induced by linear projections can be uniformly approximated by a kernel defined directly on the original input representations. This implies that explicit query and key projections are not strictly necessary for modeling pairwise similarities in expressive kernel attention, and introduce additional parameters.

Motivated by this observation, we remove query and key projections and compute attention directly on the input representations:
\begin{equation}
\mathrm{Attn}(\mathbf{X})_i 
= 
\frac{
\sum_{j=1}^{L} \widetilde{k}_\theta(\mathbf{x}_i, \mathbf{x}_j)\, \mathbf{v}_j
}{
\sum_{j'=1}^{L} \widetilde{k}_\theta(\mathbf{x}_i, \mathbf{x}_{j'})
}.
\label{eq:remove_qk_attention}
\end{equation}
This simplification of the attention mechanism reduces projection-related parameters by shifting token interaction modeling from learned projections to the expressive kernel attention mechanism.

\subsubsection{Value Representation with Fixed Projections}
\label{sec:value_representation}
After removing query and key projections, the attention weights are computed directly from the input representations through the learned kernel.
A natural question is whether the value branch in Eq.~\eqref{eq:remove_qk_attention} can also be simplified by directly using the original token representations as values.
However, directly using $\mathbf{X}$ provides only a single value representation space.
In a multi-head setting, this may limit representation diversity across heads and lead to redundant outputs, even if different heads apply distinct kernel attention. Therefore, the value branch cannot be removed in the same way as query and key projections.

The reason is that the value branch plays a different role from the query and key branches.
Query and key projections affect similarity computation and can therefore be absorbed by a sufficiently expressive learned kernel.
By contrast, the value branch defines the feature space in which token information is aggregated.
Directly using $\mathbf{X}$ forces all heads to aggregate in the same original representation space, whereas value projections provide different feature subspaces to capture complementary sequence information.
However, these value projections do not need to be learnable. Since the learned kernel already adaptively determines the aggregation weights over the values, the value projection mainly provides the feature space in which the selected token information is represented.

Motivated by this observation, we design a non-trainable value projection $\mathbf{W}_V \in \mathbb{R}^{d_x \times d_v}$:
\begin{equation}
\mathrm{Attn}(\mathbf{X})_i
=
\sum_{j=1}^{L}
\alpha_{ij}^{\theta} \mathbf{x}_j \mathbf{W}_V,
\qquad
\alpha_{ij}^{\theta}
=
\frac{
\widetilde{k}_{\theta}(\mathbf{x}_i,\mathbf{x}_j)
}{
\sum_{j'=1}^{L} \widetilde{k}_{\theta}(\mathbf{x}_i,\mathbf{x}_{j'})
}.
\end{equation}
Here, the positive-feature affinity \(\widetilde{k}_{\theta}\) determines the input-dependent similarity structure and assigns adaptive aggregation weights, while the fixed projection \(\mathbf{W}_V\) determines how token information is represented before aggregation.

Finally, we show that FSKA can be simplified with expressive kernel attention by removing the query and key projections and fixing the value projection, since the learned kernel already models token similarities directly while the value projection mainly provides the representation space for aggregation.
This simplified design reduces projection-related parameter size while preserving the expressive capacity of the attention mechanism. This formulation also motivates our multi-head design: different heads use head-specific learned kernels to induce diverse input-dependent similarity patterns, together with fixed orthogonal value projections that aggregate information in complementary feature subspaces.

\subsection{Multi-Head Kernel Attention}
\label{sec:multi_head_kernel_attention}
We next extend the proposed kernel attention to the multi-head setting. We assign each head a head-specific learned kernel together with a fixed orthogonal value projection, so that different heads can induce diverse input-dependent similarity patterns while aggregating information in complementary feature subspaces.

Formally, for the $h$-th head, the attention weights are defined as
\begin{equation}
\alpha_{ij}^{(h)}=
\frac{\widetilde{k}_{\theta_h}(\mathbf{x}_i,\mathbf{x}_j)}
{\sum_{j'=1}^{L}\widetilde{k}_{\theta_h}(\mathbf{x}_i,\mathbf{x}_{j'})},
\end{equation}
where each head's spectral density is parameterized by a normalizing flow as in Section~\ref{sec:flow_spectral}, and $\widetilde{k}_{\theta_h}$ denotes the affinity after the positive feature transformation in Section~\ref{sec:nonstationary_kernel}. The corresponding head output is
\begin{equation}
\mathrm{Attn}^{(h)}(\mathbf{X})_i
=\sum_{j=1}^{L}\alpha_{ij}^{(h)}\mathbf{x}_j\mathbf{W}_V^{(h)},
\end{equation}
where $\mathbf{W}_V^{(h)}\in\mathbb{R}^{d_x\times d_v}$ is a fixed value projection.

To preserve diversity in the value space without introducing learnable value projections, we design the value projections of different heads to be mutually orthogonal:
\begin{equation}
\left(\mathbf{W}_V^{(h)}\right)^\top\mathbf{W}_V^{(h')}=\mathbf{0},\qquad h\ne h'.
\end{equation}
This orthogonality encourages different heads to aggregate information in distinct value subspaces, thereby reducing redundancy among head outputs.

The final multi-head output is obtained by concatenating all head outputs:
\begin{equation}
\mathrm{MHAttn}(\mathbf{X})_i
=\mathrm{Concat}\big(\mathrm{Attn}^{(1)}(\mathbf{X})_i,\dots,\mathrm{Attn}^{(H)}(\mathbf{X})_i\big).
\end{equation}
This design preserves multi-head diversity through two mechanisms: head-specific kernel attention and orthogonal fixed value subspaces. Compared with standard multi-head attention, this design removes explicit query and key projections and avoids learnable value projections, thereby reducing projection-related parameters while preserving diverse token interactions through head-specific learned kernels and complementary value subspaces.

%% file: context/experiment.tex
\section{Experiments}
\label{sec:experiments}
In this section, we first examine the contribution of each design component in Section~\ref{sec:ablation}, then report sentence and full-document classification results in Sections~\ref{sec:text_classification} and~\ref{sec:long_seq_cls}, and finally analyze parameter efficiency and memory scaling in Section~\ref{sec:param_memory_scalability} to evaluate practical efficiency. 
Additional analyses, including PRF stability and similarity differences, nonstationary regression, hyperparameter sensitivity, and LRA-style benchmark settings, are provided in Appendix~\ref{app:additional_experiments}. The new document-classification protocol and supplementary spectral-learning, query--key, and regularization ablations are collected in Appendices~\ref{app:document_protocol} and~\ref{app:document_ablations}.

\subsection{Ablation Study}
\label{sec:ablation}
We start by conducting an ablation study on MNIST to isolate the effects of the main architectural choices in our attention module.
The results are reported in Table~\ref{table:ablation}.
Overall, the ablation supports two central claims of our method:
FSKA improves model performance by learning expressive nonstationary kernels for attention through a flow-based spectral density parameterization.
Moreover, the flexible learned kernel enables a simpler projection design while preserving the expressive capacity of the model.

First, our method achieves the best performance among the compared methods, demonstrating the benefit of explicit flow-based spectral kernel learning.
Unlike the standard Transformer, which relies on a fixed softmax kernel, and IKAN, whose kernel is learned implicitly without an explicit spectral density, our method directly parameterizes a bivariate spectral density with normalizing flows.
This yields a flexible and regularizable nonstationary kernel attention mechanism, improving the model's ability to capture complex token interactions.

Second, the ablation confirms that an expressive kernel enables a simpler projection design.
Removing projection branches from a standard Transformer causes a large performance drop, because the fixed softmax kernel alone is insufficient for adaptive similarity modeling.
In contrast, our model removes query--key projections while maintaining strong performance, as the learned kernel directly models token interactions in the input representation space.
Moreover, making the value projection learnable brings little additional benefit over the fixed-value variant.
These results support our claim that query--key projections become less necessary, and fixed value projections are sufficient, once adaptive routing is governed by a sufficiently expressive kernel.

In addition, PRFs are crucial for stable linearized attention when the number of random features is limited, e.g., \(M=96\).
Direct RFF features severely degrade performance because their signed components may cancel in the attention denominator and destabilize normalization.
Our positive feature map prevents this issue by ensuring nonnegative features.

\input{table/ablation}

\subsection{Sentence Classification}
\label{sec:text_classification}
We further evaluate the proposed FSKA on sentence-level classification tasks.
We conduct experiments on six benchmark datasets, including DBPedia, AGNews, TREC, RT-Polarity, SST-2, and SST-5, covering topic classification, question classification, and sentiment classification.\footnote{Details of the datasets and baselines are provided in Appendix~\ref{app:sentence_classification_details}.}
We compare with the standard Transformer, MIKAN, Performer, RKA, GMM-RKS, KPCA, and MetaLA~\citep{vaswani2017attention,song2021implicit,choromanski2020rethinking,peng2021random,pal2022learning,teo2024unveiling,chou2024metala}, and report F1-macro over five runs in Table~\ref{tab:text_cls_f1}.\footnote{Additional accuracy results are reported in Appendix~\ref{app:sentence_classification_details}, Table~\ref{tab:text_cls_acc}.}
Overall, FSKA achieves the strongest or highly competitive performance across the six datasets.
Notably, FSKA improves the Transformer from $76.07$ to $78.74$ on SST-2 and from $35.11$ to $38.16$ on SST-5.

These results demonstrate the benefit of explicit flow-based spectral kernel learning.
Unlike the standard Transformer and fixed-form attention variants such as Performer, RKA, KPCA, and MetaLA, whose similarity structures are largely determined by predefined attention forms, FSKA directly learns an explicit bivariate spectral density with NFs.
This yields a flexible and regularizable nonstationary kernel, allowing the model to better adapt token interactions to different classification tasks.

Compared with learnable kernel methods such as GMM-RKS and MIKAN, FSKA also shows more consistent performance.
GMM-RKS relies on a Gaussian-mixture spectral parameterization, which may suffer from increasingly difficult optimization as the number of mixture components grows.
MIKAN learns kernelized attention implicitly, but lacks an explicit and regularizable spectral density representation.
In contrast, FSKA combines explicit spectral modeling with the expressiveness of normalizing flows, leading to stronger overall performance across both Accuracy and F1-macro.

\input{table/text_seq}

\subsection{Document Classification with Full Sequences}
\label{sec:long_seq_cls}

We extend the evaluation to SCOTUS, BBC News, 20 Newsgroups (20NG), and arXiv.
SCOTUS opinions contain approximately 5--9k tokens and arXiv papers average approximately 6k tokens;
BBC News and 20NG provide complementary medium-length documents, averaging approximately 380 and 300 words, respectively.
Each tokenized document is provided in full to every model, without truncation, chunking, or sliding-window segmentation.
In addition to the attention baselines, we include FlashAttention~\citep{dao2022flashattention} and Mamba~\citep{gu2023mamba}.
Evaluation details and the available FSKA settings are given in Appendix~\ref{app:document_protocol}.
Table~\ref{tab:long_seq_cls} reports accuracy and macro-averaged F1, as mean $\pm$ standard deviation over three independent runs.

FSKA obtains the highest mean macro-F1 on all four datasets and the highest mean accuracy on BBC News, 20NG, and arXiv.
On SCOTUS, Mamba achieves the highest accuracy (59.84\%), while FSKA attains a higher macro-F1 (39.68\% versus 37.91\%).
The macro-F1 margins over the strongest competing baseline are 1.77, 1.13, 0.39, and 1.57 percentage points on SCOTUS, BBC News, 20NG, and arXiv, respectively.
These results show competitive document classification performance across the evaluated sequence lengths; the reported mean differences alone do not establish statistical significance.

\input{table/long_seq}

\input{context/efficiency}

\section{Conclusion}
\label{sec:conclusion}
To address the limited flexibility of fixed or stationary attention kernels, we proposed FSKA, a flow-based spectral kernel attention framework for learning expressive nonstationary attention kernels.
FSKA parameterizes the spectral density with normalizing flows and regularizes it with a structured spectral prior, enabling flexible kernel learning while remaining compatible with RFF-based linear attention.
We further showed that expressive kernel learning can simplify attention by removing query--key projections and using fixed orthogonal value projections. Experiments show that FSKA achieves competitive or improved performance on sentence and full-document classification tasks.
It also reduces parameter sizes by approximately 40\%--47\% compared with most baselines and maintains favorable linear-attention memory scaling on long-sequence benchmarks.
Overall, these results highlight FSKA as an expressive and parameter-efficient attention mechanism.

%% file: table/ablation.tex
\begin{table*}[t]
\centering
\caption{
Ablation study on architectural design choices for attention mechanisms on MNIST classification.
We report test accuracy (Acc).
\cmark indicates that the corresponding component is used, \xmark indicates that it is removed, and -- indicates that the component is not applicable.
}
\label{table:ablation}
\resizebox{1.0\linewidth}{!}{
\begin{tabular}{lcccc|c}
\toprule
Model 
& QK Proj. 
& Learnable V 
& Flexible Kernel
& Positive RF 
& Acc $\uparrow$ \\
\midrule

Transformer 
& \cmark 
& \cmark 
& \xmark 
& -- 
& 0.8556 \\

Transformer w/o QKV
& \xmark 
& \xmark 
& \xmark
& -- 
& 0.5267 \\

IKAN 
& \cmark 
& \cmark 
& \cmark 
& \xmark 
& 0.8656 \\

\midrule

\textbf{Ours (FSKA)} 
& \xmark 
& \xmark 
& \cmark 
& \cmark 
& \textbf{0.8722} \\

+ Direct RFF 
& \xmark 
& \xmark 
& \cmark 
& \xmark 
& 0.3956 \\

+ Learnable V 
& \xmark 
& \cmark 
& \cmark 
& \cmark 
& 0.8725 \\

\bottomrule
\end{tabular}
}
\end{table*}

%% file: table/text_seq.tex
\begin{table*}[t!]
\caption{F1-macro (\%) on text classification benchmark datasets.
Mean and standard deviation are computed over five runs.
The best result per column is bolded and the second-best is underlined.
}
\label{tab:text_cls_f1}

\centering

\setlength{\tabcolsep}{1.6mm}{
\resizebox{\linewidth}{!}{
\begin{tabular}{lc||cccccc}
\toprule
\multirow{1}{*}{MODEL} 
& \multirow{1}{*}{Venue}
& DBPedia & AGNews & TREC & RT-Polarity & SST-2 & SST-5 \\
\cmidrule(lr){3-8}

\multirow{1}{*}{METRIC} 
& 
& \multicolumn{6}{c}{F1-macro ($\uparrow$)} \\
\midrule \midrule

Transformer  
& NeurIPS 2017
& 98.13 $\pm$ 0.15 
& 90.43 $\pm$ 0.07
& \underline{86.14 $\pm$ 0.93}
& 70.52 $\pm$ 0.45
& 76.07 $\pm$ 1.10
& 35.11 $\pm$ 1.46 \\

MIKAN  
& AAAI 2021
& 97.74 $\pm$ 0.01 
& \underline{91.15 $\pm$ 0.24}
& 85.47 $\pm$ 1.07
& 71.30 $\pm$ 0.92
& 76.04 $\pm$ 0.86
& 32.84 $\pm$ 0.39 \\

Performer  
& ICLR 2021
& 98.12 $\pm$ 0.09 
& 90.23 $\pm$ 0.31
& 85.70 $\pm$ 0.57
& 70.88 $\pm$ 0.62
& 77.24 $\pm$ 0.41
& 35.92 $\pm$ 0.96 \\

RKA  
& ICLR 2021
& \underline{98.26 $\pm$ 0.01} 
& 90.80 $\pm$ 0.28
& 85.56 $\pm$ 0.75
& 71.15 $\pm$ 0.72
& 77.08 $\pm$ 1.46
& 36.10 $\pm$ 0.49 \\

GMM-RKS  
& TMLR 2022
& 96.76 $\pm$ 0.21 
& 90.58 $\pm$ 0.41
& 85.36 $\pm$ 0.91
& 71.05 $\pm$ 1.02
& 76.81 $\pm$ 0.89
& 36.42 $\pm$ 0.84 \\

KPCA 
& NeurIPS 2024
& 97.59 $\pm$ 0.14 
& 90.20 $\pm$ 0.49
& 85.90 $\pm$ 0.96
& \textbf{72.16 $\pm$ 1.58}
& \underline{77.62 $\pm$ 1.13}
& \underline{36.97 $\pm$ 1.36} \\

MetaLA
& NeurIPS 2024
& 97.24 $\pm$ 0.15 
& 91.14 $\pm$ 0.36
& 85.07 $\pm$ 1.02
& 70.28 $\pm$ 1.71
& 77.03 $\pm$ 0.69
& 35.27 $\pm$ 1.41 \\

\rowcolor[HTML]{e6f3ff}
Ours
& --
& \textbf{98.33 $\pm$ 0.13}
& \textbf{91.20 $\pm$ 0.37}
& \textbf{86.21 $\pm$ 1.02}
& \underline{72.08 $\pm$ 1.44}
& \textbf{78.74 $\pm$ 1.15}
& \textbf{38.16 $\pm$ 1.40} \\

\bottomrule
\end{tabular}
}
}

\end{table*}

%% file: table/long_seq.tex
\begin{table*}[t]
\centering
\caption{Full-document classification results (\%). Values are mean $\pm$ standard deviation over three independent runs. Bold denotes the highest mean within each column. F1 denotes macro-F1. Documents are processed in full; BBC News and 20NG are shorter than SCOTUS and arXiv.}
\label{tab:long_seq_cls}
\scriptsize
\setlength{\tabcolsep}{2pt}
\begin{tabular*}{\linewidth}{@{\extracolsep{\fill}}l*{8}{c}@{}}
\toprule
\multirow{2}{*}{Model} & \multicolumn{2}{c}{SCOTUS} & \multicolumn{2}{c}{BBC News} & \multicolumn{2}{c}{20NG} & \multicolumn{2}{c}{arXiv} \\
\cmidrule(lr){2-3}\cmidrule(lr){4-5}\cmidrule(lr){6-7}\cmidrule(lr){8-9}
& Acc. & F1 & Acc. & F1 & Acc. & F1 & Acc. & F1 \\
\midrule
Transformer & 55.12 $\!\pm\!$ 1.24 & 35.76 $\!\pm\!$ 0.91 & 88.43 $\!\pm\!$ 1.52 & 88.01 $\!\pm\!$ 1.57 & 24.91 $\!\pm\!$ 0.73 & 24.18 $\!\pm\!$ 0.69 & 75.62 $\!\pm\!$ 1.61 & 74.90 $\!\pm\!$ 1.66 \\
FlashAttention & 55.68 $\!\pm\!$ 0.79 & 36.21 $\!\pm\!$ 0.87 & 88.91 $\!\pm\!$ 0.49 & 88.47 $\!\pm\!$ 0.54 & 25.46 $\!\pm\!$ 0.70 & 24.92 $\!\pm\!$ 0.67 & 76.18 $\!\pm\!$ 0.58 & 75.41 $\!\pm\!$ 0.63 \\
Performer & 52.38 $\!\pm\!$ 0.77 & 31.05 $\!\pm\!$ 0.88 & 83.74 $\!\pm\!$ 1.26 & 82.96 $\!\pm\!$ 1.21 & 25.33 $\!\pm\!$ 0.81 & 24.97 $\!\pm\!$ 0.78 & 69.81 $\!\pm\!$ 1.04 & 68.93 $\!\pm\!$ 1.09 \\
Mamba & \textbf{59.84 $\!\pm\!$ 0.68} & 37.91 $\!\pm\!$ 0.78 & 89.34 $\!\pm\!$ 0.46 & 88.96 $\!\pm\!$ 0.51 & 28.91 $\!\pm\!$ 0.63 & 28.47 $\!\pm\!$ 0.66 & 76.92 $\!\pm\!$ 0.54 & 76.24 $\!\pm\!$ 0.59 \\
RKA & 53.06 $\!\pm\!$ 0.99 & 31.72 $\!\pm\!$ 1.26 & 85.18 $\!\pm\!$ 0.48 & 84.71 $\!\pm\!$ 0.55 & 28.62 $\!\pm\!$ 0.64 & 28.34 $\!\pm\!$ 0.67 & 70.15 $\!\pm\!$ 1.58 & 69.42 $\!\pm\!$ 1.62 \\
GMM-RKS & 39.44 $\!\pm\!$ 0.93 & 12.31 $\!\pm\!$ 0.82 & 66.92 $\!\pm\!$ 0.75 & 65.81 $\!\pm\!$ 0.79 & 27.13 $\!\pm\!$ 0.44 & 23.62 $\!\pm\!$ 0.31 & 73.08 $\!\pm\!$ 0.51 & 72.47 $\!\pm\!$ 0.28 \\
KPCA-Scaled & 56.37 $\!\pm\!$ 0.71 & 37.42 $\!\pm\!$ 0.83 & 89.56 $\!\pm\!$ 0.43 & 89.18 $\!\pm\!$ 0.49 & 22.74 $\!\pm\!$ 0.68 & 22.01 $\!\pm\!$ 0.72 & 76.84 $\!\pm\!$ 0.55 & 76.09 $\!\pm\!$ 0.60 \\
MIKAN & 40.28 $\!\pm\!$ 0.86 & 17.05 $\!\pm\!$ 0.74 & 84.31 $\!\pm\!$ 0.59 & 83.92 $\!\pm\!$ 0.63 & 18.46 $\!\pm\!$ 0.71 & 17.83 $\!\pm\!$ 0.68 & 64.92 $\!\pm\!$ 0.70 & 64.15 $\!\pm\!$ 0.73 \\
\rowcolor[HTML]{e6f3ff}
FSKA (Ours) & 58.91 $\!\pm\!$ 1.12 & \textbf{39.68 $\!\pm\!$ 0.71} & \textbf{90.76 $\!\pm\!$ 0.39} & \textbf{90.31 $\!\pm\!$ 0.44} & \textbf{30.24 $\!\pm\!$ 0.57} & \textbf{28.86 $\!\pm\!$ 0.61} & \textbf{78.43 $\!\pm\!$ 0.48} & \textbf{77.81 $\!\pm\!$ 0.53} \\
\bottomrule
\end{tabular*}
\end{table*}

%% file: context/efficiency.tex
\subsection{Parameter Efficiency and Memory Scalability}
\label{sec:param_memory_scalability}

We further evaluate the practical efficiency of our design by analyzing trainable parameter allocation and training memory growth under an LRA-style benchmark~\citep{tay2020long}. \footnote{Additional details are provided in Appendix~\ref{app:lra-additional}.}
All methods use the same backbone architecture and are measured under identical training conditions and hardware environment, with details provided in Appendix~\ref{app:sentence_classification_details}.

As shown in Fig.~\ref{fig:lra_param_mem_tradeoff}, FSKA achieves a favorable efficiency--scalability profile.
In Fig.~\ref{fig:lra_param_breakdown}, FSKA uses roughly \(40\%\)--\(47\%\) fewer trainable parameters than most baseline attention modules.
This reduction mainly comes from eliminating the learnable query and key projections and replacing the learnable value projection with a fixed mapping, while retaining adaptive similarity modeling through the learned nonstationary kernel.

In terms of memory, FSKA follows a low-growth trend close to linear-attention baselines.
As shown in Fig.~\ref{fig:lra_mem_scaling}, when the sequence length increases to \(4096\), FSKA keeps training memory below \(10\)K MB, whereas quadratic-style attention mechanisms grow to around \(80\)K--\(120\)K MB.
This favorable scaling comes from the RFF-based kernel attention design, which avoids explicitly forming the full quadratic attention matrix.

\begin{figure*}[t]
    \centering
    \begin{minipage}[t]{0.49\textwidth}
        \centering
        \includegraphics[width=\linewidth]{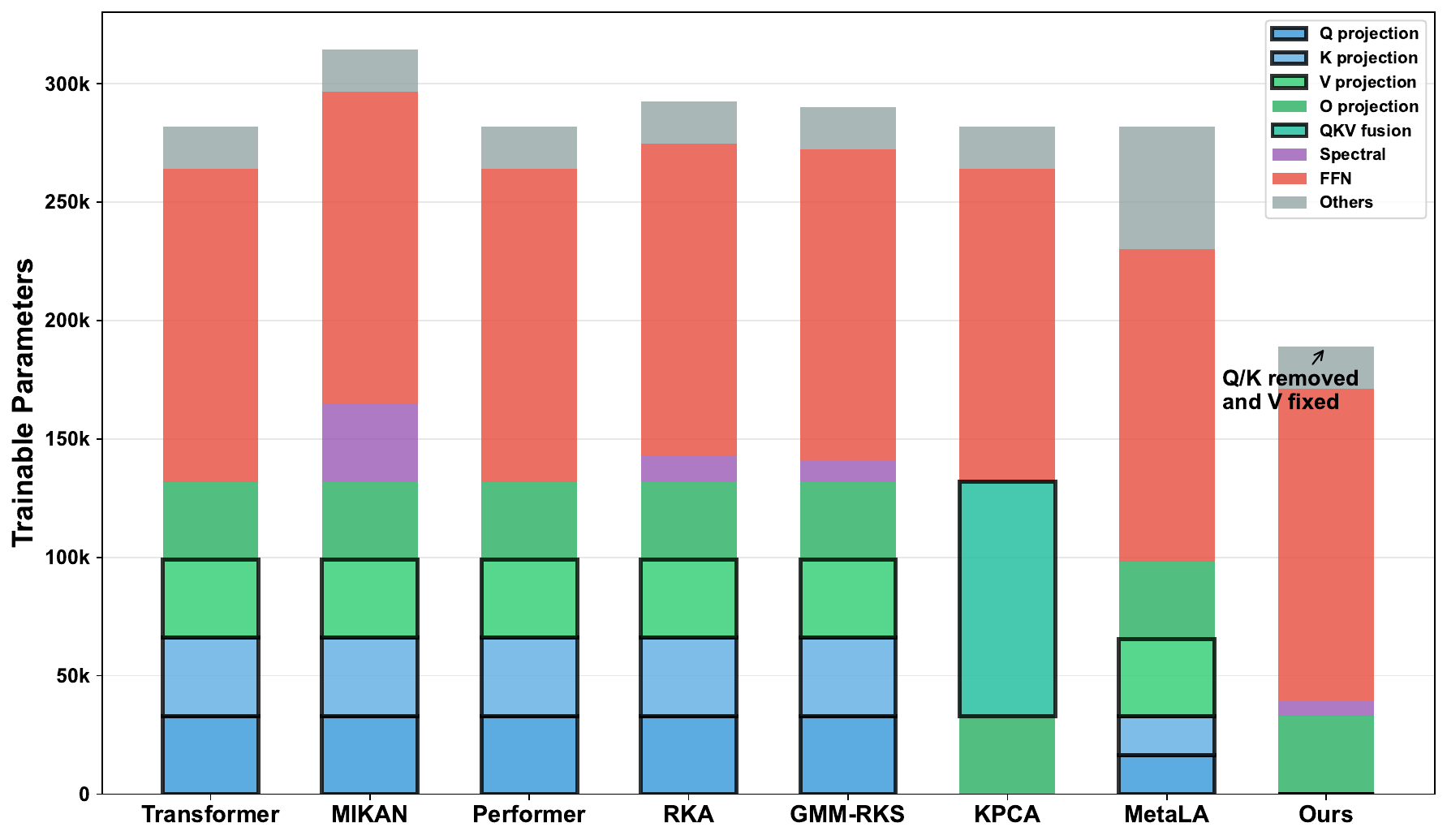}
        
        \subcaption{Trainable parameter breakdown by component.}
        \label{fig:lra_param_breakdown}
    \end{minipage}
    \hfill
    \begin{minipage}[t]{0.49\textwidth}
        \centering
        \includegraphics[width=\linewidth]{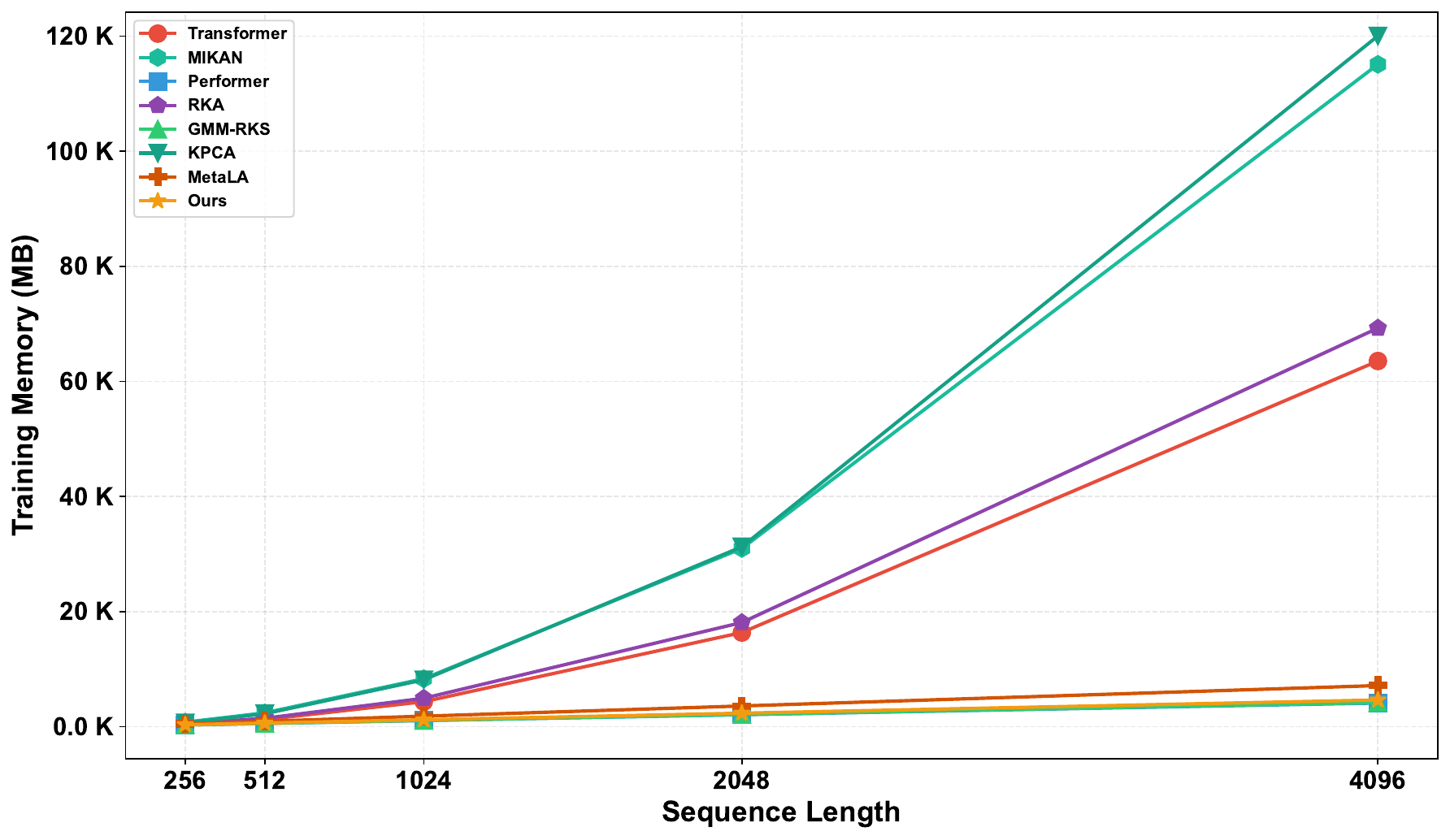}
        
        \subcaption{Training memory versus sequence length.}
        \label{fig:lra_mem_scaling}
    \end{minipage}
    \caption{
    Parameter efficiency and memory scalability under the LRA-style benchmark.
    Left: different methods allocate trainable parameters differently across projection, spectral, and feed-forward components.
    Right: training memory growth with increasing sequence length.
    }
    \label{fig:lra_param_mem_tradeoff}
\end{figure*}

%% file: context/app.tex
\section{Additional Proofs}
\label{app:proofs}

\subsection{Proof of Theorem~\ref{theo:Non_stationary_RFF}}
\label{app:Non_stationary_RFF}

\begin{proof}
By Theorem~\ref{theo:Uni_Bochner}, the nonstationary kernel admits the
spectral representation
\begin{equation}
\label{eq:app_ubo_exp}
\begin{aligned}
k(\mathbf{x}_1,\mathbf{x}_2)
=
\frac{1}{4}
\int
\Big(
&
\exp\!\left(i(\mathbf{x}_1 \boldsymbol{\omega}_1^\top
-\mathbf{x}_2 \boldsymbol{\omega}_2^\top)\right)
+
\exp\!\left(i(\mathbf{x}_1 \boldsymbol{\omega}_2^\top
-\mathbf{x}_2 \boldsymbol{\omega}_1^\top)\right)
\\
&+
\exp\!\left(i(\mathbf{x}_1 \boldsymbol{\omega}_1^\top
-\mathbf{x}_2 \boldsymbol{\omega}_1^\top)\right)
+
\exp\!\left(i(\mathbf{x}_1 \boldsymbol{\omega}_2^\top
-\mathbf{x}_2 \boldsymbol{\omega}_2^\top)\right)
\Big)
\,u(\mathrm{d}\boldsymbol{\omega}_1,\mathrm{d}\boldsymbol{\omega}_2).
\end{aligned}
\end{equation}
For real-valued kernels, we take the real part of Eq.~\eqref{eq:app_ubo_exp}.
When \(u\) admits a density \(p(\boldsymbol{\omega}_1,\boldsymbol{\omega}_2)\),
this gives
\begin{equation}
\label{eq:app_ubo_cos}
\begin{aligned}
k(\mathbf{x}_1,\mathbf{x}_2)
=
\frac{1}{4}
\mathbb{E}_{p(\boldsymbol{\omega}_1,\boldsymbol{\omega}_2)}
\Big[
&
\cos(\mathbf{x}_1 \boldsymbol{\omega}_1^\top
-\mathbf{x}_2 \boldsymbol{\omega}_2^\top)
+
\cos(\mathbf{x}_1 \boldsymbol{\omega}_2^\top
-\mathbf{x}_2 \boldsymbol{\omega}_1^\top)
\\
&+
\cos(\mathbf{x}_1 \boldsymbol{\omega}_1^\top
-\mathbf{x}_2 \boldsymbol{\omega}_1^\top)
+
\cos(\mathbf{x}_1 \boldsymbol{\omega}_2^\top
-\mathbf{x}_2 \boldsymbol{\omega}_2^\top)
\Big].
\end{aligned}
\end{equation}

Let
\(\{(\boldsymbol{\omega}_1^{(m)},\boldsymbol{\omega}_2^{(m)})\}_{m=1}^{M/2}\)
be i.i.d. sampled row vectors from \(p(\boldsymbol{\omega}_1,\boldsymbol{\omega}_2)\).
The corresponding Monte Carlo estimator is
\begin{equation}
\label{eq:app_mc_estimator}
\begin{aligned}
\widehat{k}(\mathbf{x}_1,\mathbf{x}_2)
=
\frac{1}{2M}
\sum_{m=1}^{M/2}
\Big[
&
\cos(\mathbf{x}_1 \boldsymbol{\omega}_1^{(m)\top}
-\mathbf{x}_2 \boldsymbol{\omega}_2^{(m)\top})
+
\cos(\mathbf{x}_1 \boldsymbol{\omega}_2^{(m)\top}
-\mathbf{x}_2 \boldsymbol{\omega}_1^{(m)\top})
\\
&+
\cos(\mathbf{x}_1 \boldsymbol{\omega}_1^{(m)\top}
-\mathbf{x}_2 \boldsymbol{\omega}_1^{(m)\top})
+
\cos(\mathbf{x}_1 \boldsymbol{\omega}_2^{(m)\top}
-\mathbf{x}_2 \boldsymbol{\omega}_2^{(m)\top})
\Big].
\end{aligned}
\end{equation}
Using the trigonometric identity
\(\cos(a-b)=\cos(a)\cos(b)+\sin(a)\sin(b)\), we can rewrite
Eq.~\eqref{eq:app_mc_estimator} as
\begin{equation}
\label{eq:app_feature_expansion}
\begin{aligned}
\widehat{k}(\mathbf{x}_1,\mathbf{x}_2)
=
\frac{1}{2M}
\sum_{m=1}^{M/2}
\Big[
&
\Big(
\cos(\mathbf{x}_1 \boldsymbol{\omega}_1^{(m)\top})
+
\cos(\mathbf{x}_1 \boldsymbol{\omega}_2^{(m)\top})
\Big)
\\[-1mm]
&\qquad\times\Big(
\cos(\mathbf{x}_2 \boldsymbol{\omega}_1^{(m)\top})
+
\cos(\mathbf{x}_2 \boldsymbol{\omega}_2^{(m)\top})
\Big)
\\
&+
\Big(
\sin(\mathbf{x}_1 \boldsymbol{\omega}_1^{(m)\top})
+
\sin(\mathbf{x}_1 \boldsymbol{\omega}_2^{(m)\top})
\Big)
\\[-1mm]
&\qquad\times\Big(
\sin(\mathbf{x}_2 \boldsymbol{\omega}_1^{(m)\top})
+
\sin(\mathbf{x}_2 \boldsymbol{\omega}_2^{(m)\top})
\Big)
\Big].
\end{aligned}
\end{equation}
Now define the randomized feature map
\begin{equation}
\label{eq:app_rff_feature_map}
\boldsymbol{\phi}(\mathbf{x})
=
\sqrt{\frac{1}{2M}}
\begin{bmatrix}
\cos\!\left(\mathbf{x} \boldsymbol{\omega}_1^{(1:M/2)\top}\right)
+
\cos\!\left(\mathbf{x} \boldsymbol{\omega}_2^{(1:M/2)\top}\right)
\\[1mm]
\sin\!\left(\mathbf{x} \boldsymbol{\omega}_1^{(1:M/2)\top}\right)
+
\sin\!\left(\mathbf{x} \boldsymbol{\omega}_2^{(1:M/2)\top}\right)
\end{bmatrix}
\in \mathbb{R}^{M}.
\end{equation}
Then Eq.~\eqref{eq:app_feature_expansion} can be written compactly as
\begin{equation}
\widehat{k}(\mathbf{x}_1,\mathbf{x}_2)
=
\boldsymbol{\phi}(\mathbf{x}_1)^\top \boldsymbol{\phi}(\mathbf{x}_2) .
\end{equation}
Finally, since the spectral pairs are sampled independently from
\(p(\boldsymbol{\omega}_1,\boldsymbol{\omega}_2)\), the Monte Carlo estimator is
unbiased:
\begin{equation}
\mathbb{E}
\left[
\boldsymbol{\phi}(\mathbf{x}_1)^\top \boldsymbol{\phi}(\mathbf{x}_2)
\right]
=
k(\mathbf{x}_1,\mathbf{x}_2).
\end{equation}
Therefore, the kernel \(k(\mathbf{x}_1,\mathbf{x}_2)\) admits the unbiased
random-feature estimator
\(\boldsymbol{\phi}(\mathbf{x}_1)^\top \boldsymbol{\phi}(\mathbf{x}_2)\).
\end{proof}

\subsection{Proof of Theorem~\ref{thm:qk_absorption}}
\label{app:proof_qk_absorption}

\begin{proof}[Proof]
Since $\mathcal{X}$ is compact and the linear mappings defined by $\mathbf{W}_Q$ and $\mathbf{W}_K$ are continuous, their images under right-multiplication
\[
\mathcal{X} \mathbf{W}_Q
=
\{\mathbf{x} \mathbf{W}_Q: \mathbf{x}\in\mathcal{X}\},
\qquad
\mathcal{X} \mathbf{W}_K
=
\{\mathbf{x} \mathbf{W}_K: \mathbf{x}\in\mathcal{X}\}
\]
are compact subsets of $\mathcal{U}$ and $\mathcal{V}$, respectively.

Now define the map
\[
T:\mathcal{X}\times \mathcal{X} \rightarrow \mathcal{U}\times \mathcal{V},
\qquad
T(\mathbf{x}_1,\mathbf{x}_2)
=
(\mathbf{x}_1\mathbf{W}_Q,\mathbf{x}_2\mathbf{W}_K).
\]
Because the mappings $\mathbf{x} \mapsto \mathbf{x}\mathbf{W}_Q$ and $\mathbf{x} \mapsto \mathbf{x}\mathbf{W}_K$ are continuous, the product map $T$ is also continuous. 
Since $k\in C(\mathcal{U}\times\mathcal{V})$, the composition
\[
f(\mathbf{x}_1,\mathbf{x}_2)
=
k\big(\mathbf{x}_1\mathbf{W}_Q,\mathbf{x}_2\mathbf{W}_K\big)
=
(k\circ T)(\mathbf{x}_1,\mathbf{x}_2)
\]
is continuous on $\mathcal{X}\times\mathcal{X}$. 
Hence,
\[
f\in C(\mathcal{X}\times\mathcal{X}).
\]

By the assumption that $\mathcal{K}$ is uniformly dense in $C(\mathcal{X}\times\mathcal{X})$, for any $\epsilon>0$ and any function 
$f\in C(\mathcal{X}\times\mathcal{X})$, there exists 
$\tilde{k}_{\epsilon}\in\mathcal{K}$ such that
\[
\|f-\tilde{k}_{\epsilon}\|_{\infty}
=
\sup_{\mathbf{x}_1,\mathbf{x}_2\in\mathcal{X}}
\left|
f(\mathbf{x}_1,\mathbf{x}_2)
-
\tilde{k}_{\epsilon}(\mathbf{x}_1,\mathbf{x}_2)
\right|
\le \epsilon .
\]
Substituting the definition of $f$ yields
\[
\sup_{\mathbf{x}_1,\mathbf{x}_2\in\mathcal{X}}
\left|
k(\mathbf{x}_1\mathbf{W}_Q,\mathbf{x}_2\mathbf{W}_K)
-
\tilde{k}_{\epsilon}(\mathbf{x}_1,\mathbf{x}_2)
\right|
\le \epsilon .
\]
This proves that the projected similarity function induced by query and key projections can be uniformly approximated by a kernel defined directly on the original input space. 
\end{proof}

\section{Experimental Details and Additional Experiments}
\label{app:additional_experiments}

All experiments were conducted on a cloud server equipped with 2 NVIDIA Tesla V100 GPUs (16GB memory each), a 10-core Intel Xeon Platinum 81xx series CPU, 64GB RAM, and 200GB storage.

\subsection{Additional Analysis on Positive Random Features}
\label{app:prf_stability}

We provide an additional analysis of the denominator stability issue in linearized kernel attention.
For a query token \(q_i\), the denominator of random-feature attention is given by
\begin{equation}
D_i
=
\boldsymbol{\phi}(q_i)^\top \sum_{j=1}^{L}\boldsymbol{\phi}(k_j),
\end{equation}
where \(\boldsymbol{\phi}(\cdot)\) denotes the random feature map.
Since the sine and cosine components in the direct random Fourier feature construction can take both positive and negative values, finite-sample cancellation may occur in \(D_i\).
This can lead to very small or even unstable normalization values, especially when the number of random features \(M\) is limited.

To study this effect, we simulate the denominator distribution under different random feature dimensions
\[
M \in \{32,64,128,256,512,1024,2048,4096\}.
\]
For each value of \(M\), we compute the linear-attention denominator \(D_i\) using direct random Fourier features and collect its empirical distribution over tokens and random trials.
The original plotting script measures the fraction of negative or relatively small denominators:
\begin{equation}
\widehat{P}\!\left(D_i<0\ \text{or}\ |D_i|<10^{-3}\operatorname{median}(|D|)\right).
\end{equation}
The median is computed over batch and token denominators separately for each feature count and method. This statistic differs from an absolute near-zero threshold. The diagnostic uses 32 batches of 64 queries and keys in dimension 16, independently drawn from $\mathcal{N}(0,0.25I)$, with Gaussian RBF frequencies $\omega\sim\mathcal{N}(0,I)$. It does not train a flow or a Transformer. Here the plotted $M$ counts frequency pairs, yielding $2M$ feature coordinates.
We compare direct RFF features with the positive random feature (PRF) transformation used in our model,
\begin{equation}
\widetilde{\boldsymbol{\phi}}(x) = g(\boldsymbol{\phi}(x)),
\qquad
g(u)=
\begin{cases}
u+1, & u>0,\\
\exp(u), & u\le 0,
\end{cases}
\end{equation}
which corresponds to \(\mathrm{ELU}(u)+1\).
This transformation ensures \(\widetilde{\boldsymbol{\phi}}(x)\geq 0\) element-wise.

\begin{figure*}[t]
    \centering
    \begin{subfigure}[t]{1.0\textwidth}
        \centering
        \includegraphics[width=\linewidth]{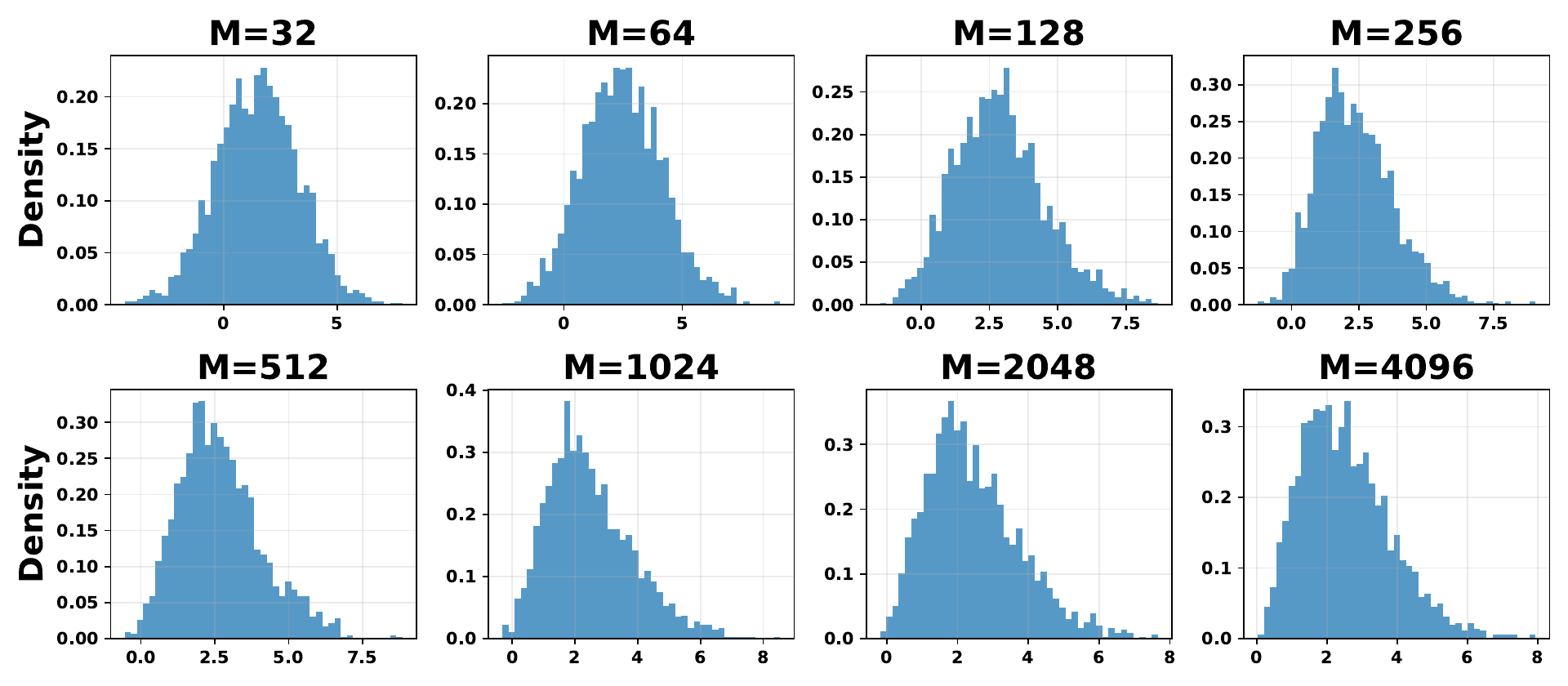}
        \caption{Distribution of the linear-attention denominator of RFFs under different random feature dimensions \(M\).}
        \label{fig:app_denom_distribution}
    \end{subfigure}
    \hfill
    \begin{subfigure}[t]{1.0\textwidth}
        \centering
        \includegraphics[width=.72\linewidth]{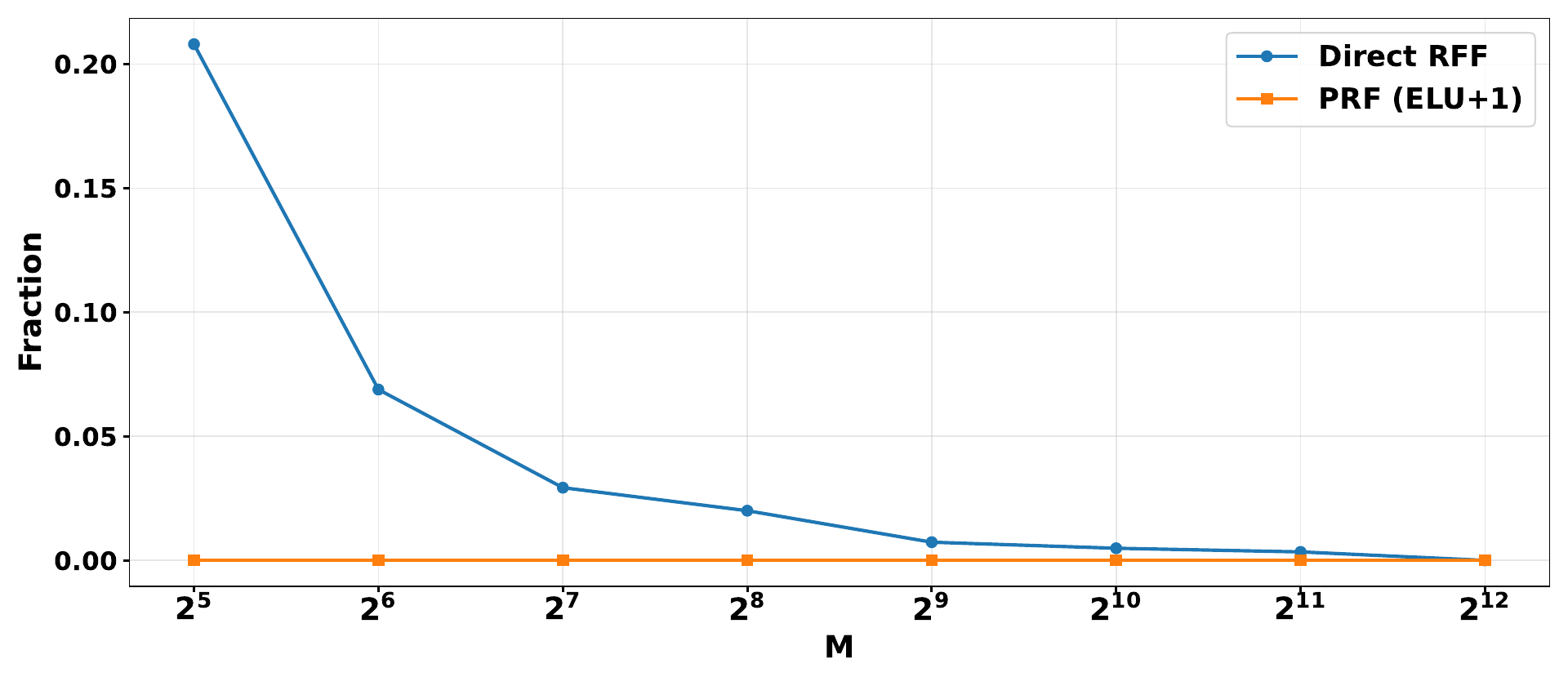}
        \caption{Fraction of negative or relatively small denominators versus \(M\).}
        \label{fig:app_unstable_frac_vs_M}
    \end{subfigure}
    \caption{
    Additional stability analysis of RFFs and PRFs.
    }
    \label{fig:app_prf_stability}
\end{figure*}

Figure~\ref{fig:app_denom_distribution} shows the empirical denominator distributions for different values of \(M\).
When \(M\) is small, the denominator distribution of RFFs has non-negligible mass near zero, indicating that the finite random-feature approximation can produce unstable normalization.
As \(M\) increases, the distribution becomes more concentrated away from zero, suggesting that using more random features reduces the probability of cancellation.

Figure~\ref{fig:app_unstable_frac_vs_M} further quantifies this trend by reporting the fraction of negative or relatively small denominators.
For direct RFF features, the unstable fraction decreases overall as \(M\) increases, but it is not completely eliminated under finite random-feature dimensions.
In contrast, PRF keeps this fraction at zero across all tested values of \(M\).

These results motivate the use of PRF in our linear-attention implementation.
Although direct RFFs provide an unbiased approximation to the target kernel, their signed feature components can make the denominator numerically unstable.
Using a positive feature transformation sacrifices the signed direct feature representation but gives a more stable attention normalization in finite-sample settings.
This is particularly important in practice because \(M\) cannot be made arbitrarily large due to memory and runtime constraints.
Therefore, PRF provides a simple and effective stabilization mechanism for scalable FSKA attention.

\input{context/prf_gap}

\subsection{Toy Example for Nonstationary Regression}
\label{sec:toy_nonstationary}

We design a controlled toy benchmark to evaluate whether different attention kernels can model
position-dependent nonstationarity under distribution shift.
The focus is on simultaneously capturing global trend consistency and local regime transitions.

\paragraph{Data generation.}
We construct a 1D dataset on a normalized grid
\[
t_i=\frac{i-1}{N-1},\quad i=1,\dots,N,\quad N=500,\quad t_i\in[0,1].
\]
The target is
\[
y_i = f(t_i) + \varepsilon_i,\qquad \varepsilon_i\sim\mathcal N\!\left(0,\sigma^2(t_i)\right),
\]
where \(f(\cdot)\) is a smooth position-dependent mixture of regime-specific oscillatory components:
\[
f(t)=\sum_{k=1}^{K} w_k(t)\,s_k(t)+m(t),\qquad K=4.
\]
The mixing weights are defined by logistic gates over absolute position:
\[
s_1(t)=\sigma(\alpha(t-\tau_1)),\;
s_2(t)=\sigma(\alpha(t-\tau_2)),\;
s_3(t)=\sigma(\alpha(t-\tau_3)),
\]
\[
w_0(t)=1-s_1(t),\;
w_1(t)=s_1(t)-s_2(t),\;
w_2(t)=s_2(t)-s_3(t),\;
w_3(t)=s_3(t),
\]
with \(\sigma(u)=1/(1+e^{-u})\), \(\alpha>0\), and \(\tau_1<\tau_2<\tau_3\).
Each \(s_k(t)\) uses different sinusoidal frequency/phase patterns, \(m(t)\) is a position-dependent mean-shift/trend term, and \(\sigma^2(t)\) introduces heteroscedastic noise.
Inputs include time, sinusoidal/polynomial terms, auxiliary variables, and gate features.
We use a 70/30 train-test split with a fixed split.

We compare 9 models: Transformer, IKAN-direct, MIKAN, five text-kernel baselines
(KPCAScaled, Performer, RKA, GMM-RKS, MetaLA), and ours FSKA.
All models are trained with Adam under the same split and evaluated by test MSE.
Figure~\ref{fig:toy_nonstationary_regression} reports per-model prediction plots and MSE.

\begin{figure*}[t]
    \centering
    \includegraphics[width=\textwidth]{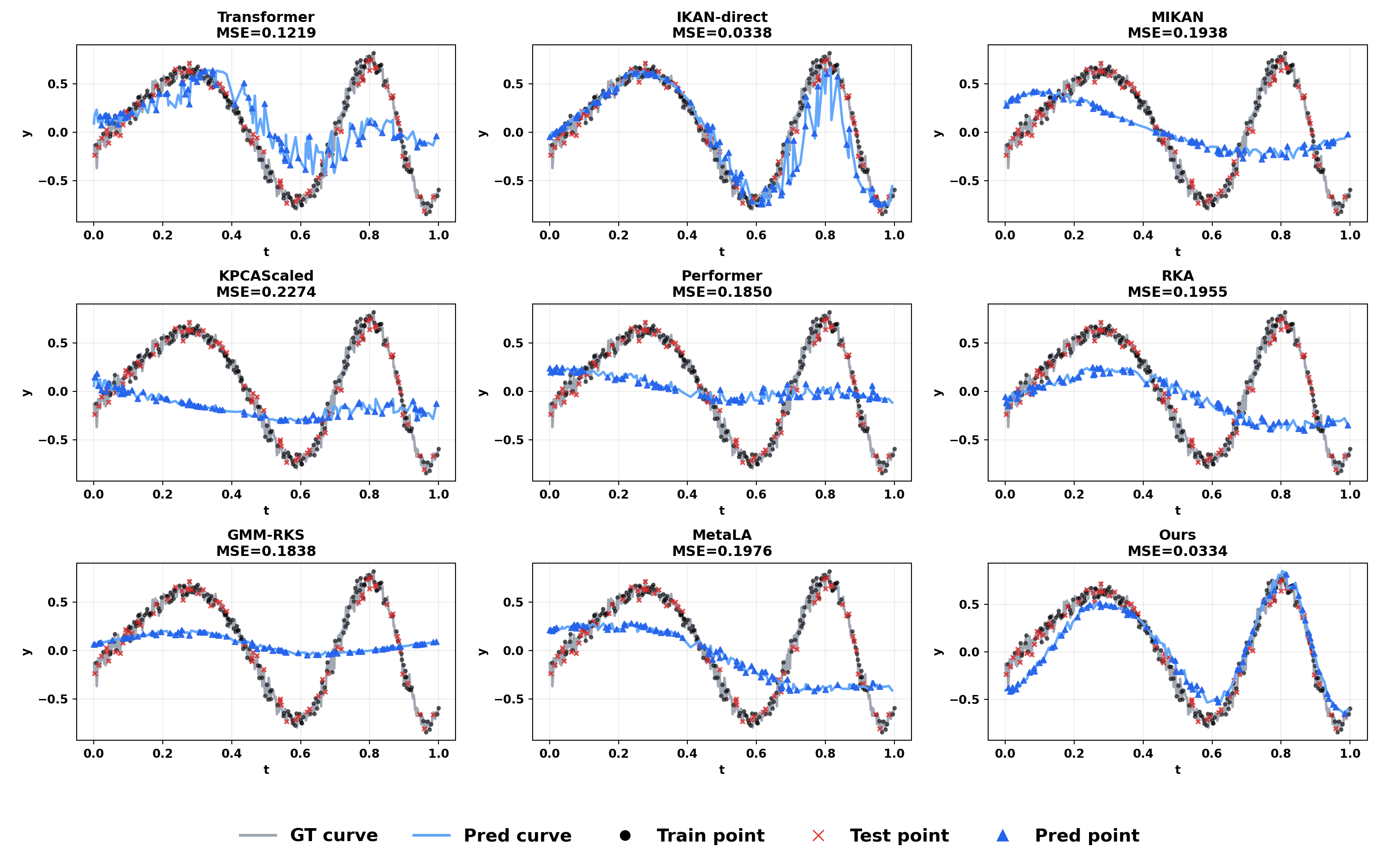}
    \caption{Regression plots on the toy nonstationary example across baselines.}
    \label{fig:toy_nonstationary_regression}
\end{figure*}

Ours FSKA achieves the best test error (MSE \(=0.0334\)), slightly outperforming IKAN-direct (0.0338),
and substantially improving over Transformer (0.1219), MIKAN (0.1938), and most text-kernel baselines (\(\approx 0.18\!-\!0.23\)).
Qualitatively, Ours FSKA tracks regime boundaries and high-curvature regions more faithfully while preserving global phase alignment.
These observations are consistent with our hypothesis:
an expressive NF-parameterized spectral kernel improves adaptation to nonstationary structure,
while simplified projection design avoids unnecessary optimization entanglement.

\subsection{Hyperparameter Sensitivity}
\label{subsec:hyperparameter_sensitivity}

We investigate the sensitivity of our model to three key hyperparameters: the NGSM mixture count $Q$, the RFF dimension $M$, and the normalizing flow depth $N_f$. All experiments are conducted on MNIST, and results are summarized in Table~\ref{tab:param_sensitivity}.

\paragraph{NGSM mixture count $Q$.}
The mixture count $Q$ controls the expressiveness of the NGSM kernel approximation. Performance improves steadily as $Q$ increases from 1 to 10 (0.8483 $\to$ 0.8743), confirming that a richer mixture provides a more flexible kernel representation. However, increasing $Q$ beyond 10 yields no further gains and even slight degradation ($Q{=}20$: 0.8431), suggesting overfitting of the mixture components. Notably, runtime remains nearly constant across all values of $Q$ (75--77s), indicating that the mixture computation is efficiently parallelized and does not constitute a bottleneck.

\paragraph{RFF dimension $M$.}
The RFF dimension $M$ governs the rank of the random feature approximation to the implicit kernel. Increasing $M$ from 32 to 96 yields a modest accuracy improvement (0.8656 $\to$ 0.8743) with only a mild runtime increase, demonstrating favorable cost-accuracy scaling. Further increasing to $M{=}128$ pushes accuracy to 0.8822 but nearly doubles the runtime (149.20s), as the quadratic cost of the feature map begins to dominate. The default $M{=}96$ offers a balanced trade-off between accuracy (0.8743) and efficiency (85.32s).

\paragraph{Normalizing flow depth $N_f$.}
The flow depth $N_f$ determines the capacity of the normalizing flow to transform the base distribution into a complex kernel shape. Increasing $N_f$ from 2 to 10 produces a consistent accuracy gain (0.8503 $\to$ 0.8707), but at a substantial runtime cost that scales approximately linearly ($\sim$13s per additional layer). Beyond $N_f{=}4$, the marginal accuracy improvement diminishes while the computational overhead continues to grow. The default $N_f{=}5$ strikes an effective balance, achieving near-peak accuracy without excessive overhead.

Overall, The default configuration ($Q{=}10$, $M{=}96$, $N_f{=}5$) consistently achieves strong performance.

\input{table/hyparameter}

\subsection{Additional Details for Sentence Classification}
\label{app:sentence_classification_details}

\subsubsection{Additional Results for Sentence Classification}

Table~\ref{tab:text_cls_acc} reports additional accuracy results on six sentence classification benchmarks.
Overall, FSKA achieves competitive performance across all datasets and obtains the best accuracy on DBPedia, TREC, RT-Polarity, and SST-2, while ranking second on AGNews.
Notably, FSKA improves over the standard Transformer from $70.93$ to $73.18$ on RT-Polarity, and from $75.99$ to $77.58$ on SST-2.
These gains suggest that the proposed flow-based spectral kernel can provide more effective token interactions than standard dot-product attention on sentiment classification tasks.

\begin{table*}[t!]
\caption{Accuracy (\%) on text classification benchmark datasets.
Mean and standard deviation are computed over five runs.
The best result per column is bolded and the second-best is underlined.}
\label{tab:text_cls_acc}
\centering

\setlength{\tabcolsep}{1.6mm}{
\scalebox{0.75}{
\begin{tabular}{lc||cccccc}
\toprule
\multirow{1}{*}{MODEL} 
& \multirow{1}{*}{Venue}
& DBPedia & AGNews & TREC & RT-Polarity & SST-2 & SST-5 \\
\cmidrule(lr){3-8}

\multirow{1}{*}{METRIC} 
& 
& \multicolumn{6}{c}{Accuracy ($\uparrow$)} \\
\midrule \midrule

Transformer  
& NeurIPS 2017
& 98.13 $\pm$ 0.15 
& 90.45 $\pm$ 0.07
& 86.67 $\pm$ 0.90
& 70.93 $\pm$ 0.83
& 75.99 $\pm$ 1.06
& 38.16 $\pm$ 0.67 \\

MIKAN  
& AAAI 2021
& 97.94 $\pm$ 0.01 
& 91.15 $\pm$ 0.24
& 86.60 $\pm$ 0.16
& 71.57 $\pm$ 0.38
& 75.40 $\pm$ 0.40
& 37.33 $\pm$ 0.73 \\

Performer  
& ICLR 2021
& 98.12 $\pm$ 0.09 
& 90.27 $\pm$ 0.29
& 85.00 $\pm$ 0.86
& 70.31 $\pm$ 0.42
& \underline{77.26 $\pm$ 0.31}
& \underline{39.44 $\pm$ 0.42} \\

RKA  
& ICLR 2021
& \underline{98.16 $\pm$ 0.01} 
& 90.82 $\pm$ 0.28
& \underline{86.80 $\pm$ 0.82}
& 72.65 $\pm$ 0.82
& 76.77 $\pm$ 0.88
& \textbf{39.87 $\pm$ 0.50} \\

GMM-RKS  
& TMLR 2022
& 96.82 $\pm$ 0.18 
& 90.61 $\pm$ 0.34
& 85.74 $\pm$ 0.88
& 71.42 $\pm$ 0.96
& 76.38 $\pm$ 0.73
& 38.72 $\pm$ 0.61 \\

KPCA 
& NeurIPS 2024
& 97.60 $\pm$ 0.10
& \textbf{91.22 $\pm$ 0.39}
& 85.82 $\pm$ 0.72
& \underline{72.94 $\pm$ 1.36}
& 76.72 $\pm$ 0.91
& 39.15 $\pm$ 1.12 \\

MetaLA
& NeurIPS 2024
& 97.62 $\pm$ 0.11 
& 91.15 $\pm$ 0.47
& 85.31 $\pm$ 0.83
& 72.52 $\pm$ 1.56
& 74.84 $\pm$ 0.78
& 37.31 $\pm$ 1.24 \\

\rowcolor[HTML]{e6f3ff}
Ours   
& --
& \textbf{98.23 $\pm$ 0.17}
& \underline{91.20 $\pm$ 0.31}
& \textbf{86.87 $\pm$ 0.76}
& \textbf{73.18 $\pm$ 1.22}
& \textbf{77.58 $\pm$ 0.95}
& 39.01 $\pm$ 1.21 \\

\bottomrule
\end{tabular}
}
}
\end{table*}

\subsubsection{Baselines and Their Parameter Settings}

All models are trained within the same sentence classification benchmark pipeline. Unless otherwise noted, the shared backbone hyperparameters are:
embedding dimension $128$, hidden dimension $128$, $4$ attention heads, $2$ encoder layers, feed-forward dimension $256$, dropout $0.1$, maximum sequence length $256$, batch size $32$, learning rate $10^{-3}$, weight decay $10^{-4}$, cosine learning-rate scheduling, $5$ warm-up epochs, early stopping patience $10$, and at most $50$ training epochs. Text is lowercased, tokenized with a whitespace tokenizer, and represented with a vocabulary built from the training split only.

\paragraph{Transformer.}
The Transformer baseline is the standard encoder-style multi-head self-attention model of Vaswani et al.~\citep{vaswani2017attention}. In our implementation, each layer contains learned $Q$, $K$, $V$, and output projections, with hidden dimension $128$, $4$ heads, and per-head dimension $32$. We use $2$ Transformer encoder layers, feed-forward dimension $256$, and dropout $0.1$. This model serves as the quadratic-attention reference.

\paragraph{MIKAN.}
MIKAN is our implicit kernel attention baseline, implemented in the benchmark as a multi-head implicit kernel attention network~\citep{song2021implicit}. It uses the same backbone dimensions as the Transformer baseline, but replaces softmax attention with the implicit kernel attention module. The attention layer uses $M=64$ random Fourier features per head.

\paragraph{Performer.}
Performer is the FAVOR+ linear-attention baseline of Choromanski et al.~\citep{choromanski2020rethinking}. It uses the same encoder backbone as the other models, but replaces quadratic softmax attention with positive orthogonal random feature attention. In our code, the number of Performer random features is not explicitly overridden by the benchmark driver, so the implementation falls back to its internal default
$n_{\text{features}}=\lfloor d_{\text{head}}\log d_{\text{head}} \rfloor$.
With $d_{\text{head}}=32$, this yields $n_{\text{features}}=\lfloor 32 \log 32 \rfloor = 110$ random features.
The projection matrix is sampled as a Gaussian orthogonal random matrix and is redrawn every $1000$ training calls.

\paragraph{RKA.}
RKA is a random kernel attention baseline with learned random features~\citep{peng2021random}. It shares the same encoder backbone dimensions as the other baselines, but generates Fourier features through a small generator network rather than using fixed random projections. In our implementation, RKA uses $64$ random features, i.e., \texttt{n\_features=64}. Since no explicit \texttt{noise\_dims} are passed by the benchmark driver, the code uses the default generator dimensions
$[d_{\text{head}}, 2d_{\text{head}}, d_{\text{head}}] = [32, 64, 32]$.
The generator uses leaky-ReLU hidden activations and a tanh output activation.

\paragraph{GMM-RKS.}
GMM-RKS is a Gaussian-mixture random kitchen sinks baseline that learns the spectral distribution through per-head Gaussian mixture parameters~\citep{pal2022learning}. It uses the same shared encoder backbone hyperparameters as the other baselines. In our implementation, the feature map uses \texttt{n\_dims=64}, with one learnable mean vector and one learnable full covariance matrix per attention head. 

\paragraph{FlashAttention (document classification).}
FlashAttention is an IO-aware algorithm for exact softmax attention that uses tiling to reduce data transfers between GPU memory levels~\citep{dao2022flashattention}. We include it in the full-document classification comparison in Section~\ref{sec:long_seq_cls} as an efficient exact-attention baseline.

\paragraph{Mamba (document classification).}
Mamba is a sequence model based on selective state spaces, whose input-dependent parameters control the propagation of information along the sequence~\citep{gu2023mamba}. It scales linearly with sequence length and provides a non-attention comparison in the full-document classification experiments in Section~\ref{sec:long_seq_cls}.

\subsubsection{Datasets, Sources, and Access Links}

We evaluate on six standard sentence classification benchmarks spanning sentiment analysis, question classification, news topic classification, ontology classification, and review rating prediction.

\paragraph{RT Polarity.}
RT Polarity is a binary sentiment classification benchmark built from movie review snippets by Pang and Lee~\citep{pang2005seeing}. The public source is the Cornell Movie Review Data page:
\href{https://www.cs.cornell.edu/people/pabo/movie-review-data/}{https://www.cs.cornell.edu/people/pabo/movie-review-data/}.
In our pipeline, RT Polarity is split into train/test using an $80/20$ stratified split, and then $10\%$ of the training portion is further held out for validation.

\paragraph{SST-2.}
SST-2 is the binary sentiment subset of the Stanford Sentiment Treebank, introduced by Socher et al.~\citep{socher2013recursive}. We load it from Stanford Sentiment Treebank release:
\href{https://www-nlp.stanford.edu/sentiment/treebank.html}{https://www-nlp.stanford.edu/sentiment/treebank.html}.
In our preprocessing code, SST-2 is reconstructed from the original SST sentence-level annotations by mapping sentiment scores to binary labels and discarding neutral examples. We use the official train/dev/test split provided with the SST release.

\paragraph{SST-5.}
SST-5 is the five-way fine-grained version of the Stanford Sentiment Treebank~\citep{socher2013recursive}. It is loaded from the same original Stanford Sentiment Treebank source as SST-2:
\href{https://www-nlp.stanford.edu/sentiment/treebank.html}{https://www-nlp.stanford.edu/sentiment/treebank.html}.
Unlike SST-2, it retains all five sentiment levels (very negative, negative, neutral, positive, very positive). We use the official train/dev/test split from the original release.

\paragraph{TREC.}
TREC is a six-class question classification dataset commonly associated with the setup of Li and Roth~\citep{li2002learning}. In our benchmark, it is loaded through the HuggingFace \texttt{datasets} package
\href{https://huggingface.co/docs/datasets}{https://huggingface.co/docs/datasets}
using the dataset identifier \texttt{trec}.

\paragraph{AG News.}
AG News is a four-class news topic classification dataset included in the large-scale character-level text classification benchmark of Zhang et al.~\citep{zhang2015character}. In our code, it is loaded through the HuggingFace \texttt{datasets} package
\href{https://huggingface.co/docs/datasets}{https://huggingface.co/docs/datasets}
using the dataset identifier \texttt{ag\_news}.
The benchmark uses the standard train/test split provided by the package and additionally reserves $10\%$ of the training set for validation via a stratified split.

\paragraph{DBpedia.}
DBpedia ontology classification is a 14-class topic classification benchmark also popularized by Zhang et al.~\citep{zhang2015character}. In our code, it is loaded through the HuggingFace \texttt{datasets} package
\href{https://huggingface.co/docs/datasets}{https://huggingface.co/docs/datasets}
using the dataset identifier \texttt{dbpedia\_14}.
As with AG News, we use the standard train/test split returned by the package and construct a stratified validation split from the training data.

\paragraph{Preprocessing and vocabulary construction.}
All datasets are lowercased and tokenized with a simple whitespace tokenizer. No punctuation or number stripping is enabled in the default benchmark configuration. A vocabulary is built only from the training split, using \texttt{<PAD>} and \texttt{<UNK>} as special tokens, with maximum vocabulary size $50{,}000$ and minimum token frequency $1$. All sequences are truncated or padded to a maximum length of $256$ tokens.

\subsection{Additional Details of LRA Experiments}
\label{app:lra-additional}

\paragraph{LRA-style synthetic benchmark.}
In addition to the sentence classification datasets above, we also conduct a Long Range Arena (LRA)-style scalability benchmark to evaluate memory usage and throughput under increasing context lengths. We construct synthetic input sequences with varying sequence lengths and measure training/inference memory together with throughput under a controlled architecture configuration. Concretely, for each model, we generate random input tensors $x \in \mathbb{R}^{B \times n \times d}$ with entries sampled from a standard Gaussian distribution, where $B$ is the batch size, $n$ is the sequence length, and $d$ is the model dimension. For training-time memory measurements, we also generate random targets of the same shape and optimize a mean-squared-error objective for several warm-up and measurement steps. We vary the sequence length over $\{256, 512, 1024, 2048, 4096\}$ while keeping the remaining model hyperparameters fixed. This setup is intended to isolate computational scaling behavior rather than downstream task accuracy, and therefore should be interpreted as an LRA-style systems benchmark rather than an evaluation on the official LRA datasets.

\input{context/document_experiments}

%% file: context/prf_gap.tex
\subsubsection{Kernel Induced by the Positive Transformation}
For a fixed collection of spectral samples $\Omega$, write the implemented signed map as $\phi_{\mathrm{impl},\theta}(x;\Omega)$ and the positive map as
\[
z_\theta(x;\Omega)=g(\phi_{\mathrm{impl},\theta}(x;\Omega)),\qquad g(u)=\operatorname{ELU}(u)+1.
\]
The finite-feature kernel actually used in attention is
\begin{equation}
K_{\mathrm{ELU},\theta}^{(m)}(x,y;\Omega)
=z_\theta(x;\Omega)^\top z_\theta(y;\Omega),
\label{eq:prf_induced_kernel}
\end{equation}
where $m$ counts spectral frequency pairs. This is the same affinity denoted by $\widetilde{k}_\theta$ in the main text. For arbitrary inputs $x_1,\ldots,x_n$ and coefficients $c_1,\ldots,c_n$,
\[
\sum_{i,j}c_i c_j K_{\mathrm{ELU},\theta}^{(m)}(x_i,x_j;\Omega)
=\left\|\sum_i c_i z_\theta(x_i;\Omega)\right\|_2^2\geq0.
\]
Thus, it is positive semidefinite for every fixed shared spectral draw. Its population counterpart is
\begin{equation}
\overline{K}_{\mathrm{ELU},\theta}^{(m)}(x,y)
=\mathbb{E}_{\Omega\sim p_\theta^{\otimes m}}
\left[K_{\mathrm{ELU},\theta}^{(m)}(x,y;\Omega)\right].
\label{eq:prf_population_kernel}
\end{equation}
The expectation is also positive semidefinite. For a general RealNVP spectral distribution, we do not have a simpler closed-form expression for this expectation. The superscript emphasizes that the feature normalization and therefore this transformed kernel depend on the finite feature count.

Theorem~\ref{theo:Non_stationary_RFF} proves unbiasedness only for the signed trigonometric feature map in Eq.~\eqref{eq:rff_def}, before any positive transformation and under its stated spectral distribution. It does not imply that $K_{\mathrm{ELU},\theta}^{(m)}$ estimates the same spectral kernel without bias. Nor does it assert unbiasedness of the normalized attention weights. ELU+1 changes the similarity function as well as its numerical behavior; its effect is not equivalent to merely adding a small constant to the denominator.

\subsubsection{Similarity Gap After ELU+1}
\label{app:prf_gap}
To quantify how the positive transformation changes similarities, we reproduce the diagnostic reported in the author response. It compares the signed Gram matrix $S_{ij}=\phi(x_i)^\top\phi(x_j)$ with $S^+_{ij}=g(\phi(x_i))^\top g(\phi(x_j))$. The associated analysis uses synthetic segmented tokens with varying mean directions and within-segment drift, with 96 tokens, dimension 16, and four segments. This is a feature-map diagnostic without end-to-end training: the reference script samples two Gaussian frequency sets, normalizes tokens and frequency vectors to unit length, and uses the $2\pi$ phase convention and the bivariate trigonometric construction before ELU+1. These diagnostic normalizations belong to the synthetic feature-map analysis, which is separate from the trained text-classification experiments.

Both matrices are cosine-normalized:
\[
C_{ij}=\frac{S_{ij}}{\sqrt{S_{ii}S_{jj}}},\qquad
C^+_{ij}=\frac{S^+_{ij}}{\sqrt{S^+_{ii}S^+_{jj}}}.
\]
The reported metrics are the Pearson correlation of their off-diagonal entries and the mean absolute discrepancy,
\[
\rho=\operatorname{Corr}\big((C_{ij})_{i\ne j},(C^+_{ij})_{i\ne j}\big),\qquad
\Delta=\frac{1}{L(L-1)}\sum_{i\ne j}|C_{ij}-C^+_{ij}|.
\]
Table~\ref{tab:prf_gap} retains the six numerical pairs from the author response. The signed matrix is a finite-feature reference, not an exact evaluation of the analytical spectral integral. Consequently, $\Delta$ measures disagreement between the two normalized feature similarities, not a direct error against the exact kernel $k$.

\begin{table}[htbp]
\centering
\caption{ELU+1 similarity gap from the author response. Metrics use off-diagonal token pairs after cosine normalization. The frequency-pair count $m$ is denoted by $M$ in the response and reference analysis.}
\label{tab:prf_gap}
\begin{tabular}{rrr}
\toprule
Frequency pairs $m$ & Correlation $\rho$ & Absolute discrepancy $\Delta$\\
\midrule
16 & 0.921 & 0.794\\
32 & 0.945 & 0.794\\
64 & 0.963 & 0.800\\
128 & 0.978 & 0.798\\
256 & 0.989 & 0.806\\
512 & 0.994 & 0.802\\
\bottomrule
\end{tabular}
\end{table}

Correlation increases from 0.921 to 0.994, indicating greater linear alignment between the relative similarity patterns in this diagnostic. However, the absolute discrepancy remains approximately 0.8. High correlation therefore does not mean equality of similarities, preservation of every pairwise ordering, or convergence to the original spectral kernel. PRF introduces a positive baseline and can compress similarity contrast; the persistent discrepancy is a limitation of interpreting the transformed affinity as the original signed spectral kernel. Its use in FSKA is an explicit modeling and normalization choice, supported here by denominator diagnostics and accompanied by this measured change in similarities.

%% file: table/hyparameter.tex
\begin{table}[t]
\centering
\caption{Parameter sensitivity analysis on MNIST.}
\label{tab:param_sensitivity}
\begin{tabular}{lcc}
\toprule
\textbf{Model} & \textbf{Acc} & \textbf{Runtime (s)} \\
\midrule
\textbf{Ours (FSKA: $Q{=}10$, $M{=}96$, $N_f{=}5$)} & 0.8743 & 85.32 \\
\midrule
\multicolumn{3}{l}{\textit{(a) NGSM Mixture Count $Q$}} \\
\quad $Q=1$   & 0.8483 & 76.52 \\
\quad $Q=3$   & 0.8543 & 76.30 \\
\quad $Q=5$   & 0.8612 & 76.39 \\
\quad $Q=20$  & 0.8431 & 75.32 \\
\quad $Q=40$  & 0.8643 & 76.82 \\
\midrule
\multicolumn{3}{l}{\textit{(b) RFF Dimension $M$}} \\
\quad $M=32$  & 0.8656 & 69.43 \\
\quad $M=64$  & 0.8692 & 81.70 \\
\quad $M=128$  & 0.8822 & 149.20 \\
\midrule
\multicolumn{3}{l}{\textit{(c) Normalizing Flow Depth $N_f$}} \\
\quad $N_f=2$  & 0.8503 & 64.52 \\
\quad $N_f=4$  & 0.8643 & 90.78 \\
\quad $N_f=6$  & 0.8623 & 119.53 \\
\quad $N_f=8$  & 0.8660 & 146.84 \\
\quad $N_f=10$ & 0.8707 & 173.78 \\
\bottomrule
\end{tabular}
\end{table}

%% file: context/document_experiments.tex
\subsection{Document Classification Protocol}
\label{app:document_protocol}

\paragraph{Tasks and sequence handling.}
The document classification experiments in Section~\ref{sec:long_seq_cls} cover legal opinions (SCOTUS), news articles (BBC News), newsgroup posts (20NG), and scientific papers (arXiv).
SCOTUS documents contain approximately 5--9k tokens and arXiv papers average approximately 6k tokens.
BBC News and 20NG contain shorter documents, averaging approximately 380 and 300 words, respectively; these word counts are distinct from tokenized sequence lengths.
Every model receives the full tokenized document, without truncation, chunking, or sliding-window segmentation.
The 256-token limit in Appendix~\ref{app:sentence_classification_details} applies only to the sentence classification benchmarks.

\paragraph{Data preparation and splits.}
SCOTUS is loaded from the \texttt{scotus} subset of LexGLUE, BBC News from \texttt{SetFit/bbc-news}, and arXiv from \texttt{ccdv/arxiv-classification}.\footnote{Dataset sources: \href{https://huggingface.co/datasets/coastalcph/lex_glue}{LexGLUE}, \href{https://huggingface.co/datasets/SetFit/bbc-news}{BBC News}, and \href{https://huggingface.co/datasets/ccdv/arxiv-classification}{arXiv classification}.}
20NG is loaded with \texttt{sklearn.datasets.fetch\_20newsgroups}, using the original train/test partitions and removing headers, footers, and quoted text.\footnote{\url{https://scikit-learn.org/stable/modules/generated/sklearn.datasets.fetch_20newsgroups.html}}
The preparation pipeline caps the training pool at 4,000 documents and the test set at 2,000 documents; smaller partitions are retained.
The loader reserves 10\% of the resulting training pool for a stratified validation split, using a fixed partition across model runs.
Table~\ref{tab:document_splits} summarizes the split sizes obtained from the dataset metadata and this loading procedure.

\begin{table}[htbp]
\centering
\caption{Document classification splits produced by the data preparation pipeline. Training counts exclude the held-out validation subset.}
\label{tab:document_splits}
\small
\begin{tabular}{lrrrr}
\toprule
Dataset & Classes & Train & Validation & Test \\
\midrule
SCOTUS & 13 & 3,600 & 400 & 1,400 \\
BBC News & 5 & 1,102 & 123 & 1,000 \\
20NG & 20 & 3,600 & 400 & 2,000 \\
arXiv & 11 & 3,600 & 400 & 2,000 \\
\bottomrule
\end{tabular}
\end{table}

\paragraph{Tokenization and vocabulary.}
The shared text preprocessor lowercases text, normalizes whitespace, and tokenizes by whitespace splitting.
Punctuation and numbers are retained.
The vocabulary is built on the training split only, with minimum token frequency 2 and a maximum of 50,000 entries including \texttt{<PAD>} and \texttt{<UNK>}.
Unseen tokens map to \texttt{<UNK>}; padding is accompanied by an attention mask.

\paragraph{FSKA implementation and training.}
The supplied FSKA classification configuration uses embedding and hidden dimensions of 128, four attention heads, two encoder layers, a feed-forward dimension of 256, and dropout 0.1.
The training implementation uses Adam with learning rate $10^{-3}$, weight decay $10^{-4}$, batch size 32, gradient-norm clipping at 1.0, and a budget of 50 epochs.
Its cosine scheduler uses $T_{\max}=45$ and is stepped once per completed epoch; the scheduler's configuration offset does not implement a separate learning-rate warm-up phase.
Classification is trained with cross-entropy; FSKA additionally uses the spectral penalty weighted by $\lambda_{\mathrm{spec}}=10^{-3}$, with no regularization warm-up in the default configuration.
Validation accuracy is monitored during training, and test accuracy and macro-F1 are evaluated after training.

\paragraph{Models and evaluation.}
We compare Transformer, FlashAttention, Performer, Mamba, RKA, GMM-RKS, KPCA-Scaled, MIKAN, and FSKA.
FSKA uses the positive random features described in Section~\ref{sec:nonstationary_kernel}, with learned query--key projections removed and fixed orthogonal value projections.
Table~\ref{tab:long_seq_cls} reports accuracy and macro-averaged F1 in percent, with mean and standard deviation across three independent runs.
Accuracy measures the fraction of correctly classified documents; macro-F1 averages the per-class F1 scores with equal weight across classes.

\subsection{Additional Ablations on Document Classification}
\label{app:document_ablations}

\subsubsection{Learning the Bivariate Spectral Density}
\label{app:fixed_bivariate}

To separate the contribution of spectral learning from the bivariate feature construction, we replace the learned flow-based spectral density with a fixed bivariate Gaussian.
The bivariate nonstationary feature construction, PRF transformation, removal of learned query--key projections, fixed value projection, and remaining experimental settings are retained.
Table~\ref{tab:fixed_bivariate_doc} reports the comparison on BBC News and 20NG.
The learned spectrum improves accuracy by 26.56 and 8.44 percentage points, and macro-F1 by 26.81 and 8.46 points, respectively.
This comparison supports the contribution of learning the spectrum beyond the fixed bivariate construction evaluated here.

\begin{table}[htbp]
\centering
\caption{Learned versus fixed bivariate spectral density. Accuracy and macro-F1 are reported in percent.}
\label{tab:fixed_bivariate_doc}
\small
\setlength{\tabcolsep}{6pt}
\begin{tabular}{lcccc}
\toprule
& \multicolumn{2}{c}{BBC News} & \multicolumn{2}{c}{20NG} \\
\cmidrule(lr){2-3}\cmidrule(lr){4-5}
Model & Accuracy & Macro-F1 & Accuracy & Macro-F1 \\
\midrule
FSKA & \textbf{90.76} & \textbf{90.31} & \textbf{30.24} & \textbf{28.86} \\
Fixed bivariate Gaussian & 64.20 & 63.50 & 21.80 & 20.40 \\
\bottomrule
\end{tabular}
\end{table}

\subsubsection{Finite-Capacity Query--Key Ablation}
\label{app:qk_capacity}

We compare FSKA with and without learnable query--key projections on 20NG while varying the random-feature capacity and flow depth.
The default flow depth for these document-classification experiments is $N_f=3$.
The first sweep uses $M\in\{32,64,96,128\}$ with $N_f=2$; the second uses $N_f\in\{2,3,5,7,9\}$ with $M=32$.
Table~\ref{tab:qk_capacity_doc} reports the signed accuracy gaps between the two configurations in percentage points.
Following the convention that a positive gap is an accuracy cost of removing projections, we write
\[
\Delta_{\mathrm{QK}}=\operatorname{Acc}(\text{with Q/K})-\operatorname{Acc}(\text{without Q/K}).
\]
Thus, positive values favor retaining learned projections and negative values favor removing them.
Across the tested settings, the gap magnitude is at most 0.90 points.
At $N_f=2$, the gap changes from positive at $M=32,64$ to negative at $M=96,128$; with $M=32$, it is negative at each tested depth of at least three layers.
The results provide a finite-capacity comparison on a document classification task, separate from the approximation assumptions used in the theoretical analysis.

\begin{table}[htbp]
\centering
\caption{Capacity sweeps on 20NG. Entries are signed Q/K accuracy gaps in percentage points.}
\label{tab:qk_capacity_doc}
\small
\setlength{\tabcolsep}{7pt}
\begin{tabular}{lccccc}
\toprule
\multicolumn{6}{l}{Random-feature capacity sweep, fixed $N_f=2$} \\
\midrule
$M$ & 32 & 64 & 96 & 128 & -- \\
Signed gap & $+0.90$ & $+0.81$ & $-0.50$ & $-0.87$ & -- \\
\midrule
\multicolumn{6}{l}{Flow-depth sweep, fixed $M=32$} \\
\midrule
$N_f$ & 2 & 3 & 5 & 7 & 9 \\
Signed gap & $+0.90$ & $-0.51$ & $-0.55$ & $-0.51$ & $-0.47$ \\
\bottomrule
\end{tabular}
\end{table}

\subsubsection{Spectral Regularization Strength}
\label{app:spectral_regularization}

The spectral regularization weight is $\lambda_{\mathrm{spec}}=10^{-3}$ by default.
We vary this weight over $\{0,10^{-4},10^{-3},10^{-2},10^{-1}\}$ on 20NG, as reported in Table~\ref{tab:spec_reg_doc}.
Setting the weight to zero removes the spectral regularization penalty while retaining the learned spectral model.
Relative to the default, removing the penalty reduces accuracy from 30.24\% to 27.42\% and macro-F1 from 28.86\% to 25.83\%.
Increasing the weight to $10^{-1}$ also reduces both metrics.
The default gives the highest reported scores among the tested values; the scores at $10^{-4}$ and $10^{-2}$ are within 0.66 accuracy points and 0.83 macro-F1 points of the default.

\begin{table}[htbp]
\centering
\caption{Spectral regularization sensitivity on 20NG. Accuracy and macro-F1 are reported in percent.}
\label{tab:spec_reg_doc}
\small
\begin{tabular}{lcc}
\toprule
$\lambda_{\mathrm{spec}}$ & Accuracy & Macro-F1 \\
\midrule
0 & 27.42 & 25.83 \\
$10^{-4}$ & 29.58 & 28.03 \\
$10^{-3}$ (default) & \textbf{30.24} & \textbf{28.86} \\
$10^{-2}$ & 29.91 & 28.41 \\
$10^{-1}$ & 28.76 & 27.19 \\
\bottomrule
\end{tabular}
\end{table}

%% file: submit_for_arxiv.bbl
\begin{thebibliography}{22}
\providecommand{\natexlab}[1]{#1}
\providecommand{\url}[1]{\texttt{#1}}
\expandafter\ifx\csname urlstyle\endcsname\relax
  \providecommand{\doi}[1]{doi: #1}\else
  \providecommand{\doi}{doi: \begingroup \urlstyle{rm}\Url}\fi

\bibitem[Brown et~al.(2020)Brown, Mann, Ryder, Subbiah, Kaplan, Dhariwal,
  Neelakantan, Shyam, Sastry, Askell, et~al.]{brown2020language}
Tom Brown, Benjamin Mann, Nick Ryder, Melanie Subbiah, Jared~D Kaplan, Prafulla
  Dhariwal, Arvind Neelakantan, Pranav Shyam, Girish Sastry, Amanda Askell,
  et~al.
\newblock Language models are few-shot learners.
\newblock \emph{Advances in Neural Information Processing Systems},
  33:\penalty0 1877--1901, 2020.

\bibitem[Choromanski et~al.(2021)Choromanski, Likhosherstov, Dohan, Song, Gane,
  Sarlos, Hawkins, Davis, Mohiuddin, Kaiser, et~al.]{choromanski2020rethinking}
Krzysztof Choromanski, Valerii Likhosherstov, David Dohan, Xingyou Song,
  Andreea Gane, Tamas Sarlos, Peter Hawkins, Jared Davis, Afroz Mohiuddin,
  Lukasz Kaiser, et~al.
\newblock Rethinking attention with performers.
\newblock In \emph{International Conference on Learning Representations}, 2021.

\bibitem[Chou et~al.(2024)Chou, Yao, Wang, Pan, Zhu, Zhong, Qiao, Wu, Xu, and
  Li]{chou2024metala}
Yuhong Chou, Man Yao, Kexin Wang, Yuqi Pan, Ruijie Zhu, Yiran Zhong, Yu~Qiao,
  Jibin Wu, Bo~Xu, and Guoqi Li.
\newblock Metala: Unified optimal linear approximation to softmax attention
  map.
\newblock \emph{Advances in Neural Information Processing Systems},
  37:\penalty0 71034--71067, 2024.

\bibitem[Dao et~al.(2022)Dao, Fu, Ermon, Rudra, and
  R{\'e}]{dao2022flashattention}
Tri Dao, Daniel~Y. Fu, Stefano Ermon, Atri Rudra, and Christopher R{\'e}.
\newblock {FlashAttention}: Fast and memory-efficient exact attention with
  {IO}-awareness.
\newblock \emph{Advances in Neural Information Processing Systems},
  35:\penalty0 16344--16359, 2022.
\newblock URL \url{https://arxiv.org/abs/2205.14135}.

\bibitem[Dinh et~al.(2017)Dinh, Sohl-Dickstein, and Bengio]{dinh2016density}
Laurent Dinh, Jascha Sohl-Dickstein, and Samy Bengio.
\newblock Density estimation using real {NVP}.
\newblock \emph{International Conference on Learning Representations}, 2017.

\bibitem[Dosovitskiy et~al.(2021)Dosovitskiy, Beyer, Kolesnikov, Weissenborn,
  Zhai, Unterthiner, Dehghani, Minderer, Heigold, Gelly,
  et~al.]{dosovitskiy2020image}
Alexey Dosovitskiy, Lucas Beyer, Alexander Kolesnikov, Dirk Weissenborn,
  Xiaohua Zhai, Thomas Unterthiner, Mostafa Dehghani, Matthias Minderer, Georg
  Heigold, Sylvain Gelly, et~al.
\newblock An image is worth 16x16 words: Transformers for image recognition at
  scale.
\newblock In \emph{International Conference on Learning Representations}, 2021.

\bibitem[Gu and Dao(2023)]{gu2023mamba}
Albert Gu and Tri Dao.
\newblock {Mamba}: Linear-time sequence modeling with selective state spaces.
\newblock \emph{arXiv preprint arXiv:2312.00752}, 2023.
\newblock URL \url{https://arxiv.org/abs/2312.00752}.

\bibitem[Katharopoulos et~al.(2020)Katharopoulos, Vyas, Pappas, and
  Fleuret]{katharopoulos2020transformers}
Angelos Katharopoulos, Apoorv Vyas, Nikolaos Pappas, and Fran{\c{c}}ois
  Fleuret.
\newblock Transformers are {RNN}s: Fast autoregressive transformers with linear
  attention.
\newblock In \emph{International Conference on Machine Learning}, pages
  5156--5165. PMLR, 2020.

\bibitem[Li et~al.(2019)Li, Chang, Mroueh, Yang, and Poczos]{li2019implicit}
Chun-Liang Li, Wei-Cheng Chang, Youssef Mroueh, Yiming Yang, and Barnabas
  Poczos.
\newblock Implicit kernel learning.
\newblock \emph{Artificial Intelligence and Statistics}, pages 2007--2016,
  2019.

\bibitem[Li and Roth(2002)]{li2002learning}
Xin Li and Dan Roth.
\newblock Learning question classifiers.
\newblock \emph{International Conference on Computational Linguistics}, 2002.

\bibitem[{Pal Chowdhury} et~al.(2022){Pal Chowdhury}, Solomou, Dubey, and
  Sachan]{pal2022learning}
Sankalan {Pal Chowdhury}, Adamos Solomou, Avinava Dubey, and Mrinmaya Sachan.
\newblock Learning the transformer kernel.
\newblock \emph{Transactions on Machine Learning Research}, 2022.

\bibitem[Pang and Lee(2005)]{pang2005seeing}
Bo~Pang and Lillian Lee.
\newblock Seeing stars: Exploiting class relationships for sentiment
  categorization with respect to rating scales.
\newblock \emph{Association for Computational Linguistics}, pages 115--124,
  2005.

\bibitem[Peng et~al.(2021)Peng, Pappas, Yogatama, Schwartz, Smith, and
  Kong]{peng2021random}
Hao Peng, Nikolaos Pappas, Dani Yogatama, Roy Schwartz, Noah Smith, and
  Lingpeng Kong.
\newblock Random feature attention.
\newblock In \emph{International Conference on Learning Representations}, 2021.

\bibitem[Rahimi and Recht(2007)]{rahimi2007random}
Ali Rahimi and Benjamin Recht.
\newblock Random features for large-scale kernel machines.
\newblock \emph{Advances in Neural Information Processing Systems}, 20, 2007.

\bibitem[Socher et~al.(2013)Socher, Perelygin, Wu, Chuang, Manning, Ng, and
  Potts]{socher2013recursive}
Richard Socher, Alex Perelygin, Jean Wu, Jason Chuang, Christopher~D Manning,
  Andrew~Y Ng, and Christopher Potts.
\newblock Recursive deep models for semantic compositionality over a sentiment
  treebank.
\newblock \emph{Empirical Methods in Natural Language Processing}, pages
  1631--1642, 2013.

\bibitem[Song et~al.(2021)Song, Jung, Kim, and Moon]{song2021implicit}
Kyungwoo Song, Yohan Jung, Dongjun Kim, and Il-Chul Moon.
\newblock Implicit kernel attention.
\newblock In \emph{Proceedings of the AAAI Conference on Artificial
  Intelligence}, volume~35, pages 9713--9721, 2021.

\bibitem[Tay et~al.(2020)Tay, Dehghani, Abnar, Shen, Bahri, Pham, Rao, Yang,
  Ruder, and Metzler]{tay2020long}
Yi~Tay, Mostafa Dehghani, Samira Abnar, Yikang Shen, Dara Bahri, Philip Pham,
  Jinfeng Rao, Liu Yang, Sebastian Ruder, and Donald Metzler.
\newblock Long range arena: A benchmark for efficient transformers.
\newblock \emph{arXiv preprint arXiv:2011.04006}, 2020.

\bibitem[Teo and Nguyen(2024)]{teo2024unveiling}
Rachel~S Teo and Tan~M Nguyen.
\newblock Unveiling the hidden structure of self-attention via kernel principal
  component analysis.
\newblock \emph{Advances in Neural Information Processing Systems},
  37:\penalty0 101393--101427, 2024.

\bibitem[Tsai et~al.(2019)Tsai, Bai, Yamada, Morency, and
  Salakhutdinov]{tsai2019transformer}
Yao-Hung~Hubert Tsai, Shaojie Bai, Makoto Yamada, Louis-Philippe Morency, and
  Ruslan Salakhutdinov.
\newblock Transformer dissection: {A} unified understanding for
  {T}ransformer’s attention via the lens of kernel.
\newblock 2019.

\bibitem[Vaswani et~al.(2017)Vaswani, Shazeer, Parmar, Uszkoreit, Jones, Gomez,
  Kaiser, and Polosukhin]{vaswani2017attention}
Ashish Vaswani, Noam Shazeer, Niki Parmar, Jakob Uszkoreit, Llion Jones,
  Aidan~N Gomez, {\L}ukasz Kaiser, and Illia Polosukhin.
\newblock Attention is all you need.
\newblock \emph{Advances in Neural Information Processing Systems}, 30, 2017.

\bibitem[Yang et~al.(2025)Yang, Li, Lin, Zhang, and Olmos]{yang2025multi}
Zi~Yang, Ying Li, Zhidi Lin, Michael~Minyi Zhang, and Pablo~M Olmos.
\newblock Multi-view oriented {GPLVM}: Expressiveness and efficiency.
\newblock In \emph{Advances in Neural Information Processing Systems}, 2025.

\bibitem[Zhang et~al.(2015)Zhang, Zhao, and LeCun]{zhang2015character}
Xiang Zhang, Junbo Zhao, and Yann LeCun.
\newblock Character-level convolutional networks for text classification.
\newblock In \emph{Advances in Neural Information Processing Systems},
  volume~28, 2015.

\end{thebibliography}
